\pgfplotsset{compat=newest}
\newcommand{\Rmnum}[1]{\expandafter\@slowromancap\romannumeral #1@}
\providecommand{\abs}[1]{\lvert#1\rvert}
\newcommand{\vc}[1]{{\pmb{#1}}}
\newcommand{\sign}{\operatorname{sign}}
\newcommand{\BlackBox}{\rule{1.5ex}{1.5ex}}
\newenvironment{proof}{\par\noindent{\bf Proof\ }}{\hfill\BlackBox\\[1mm]}
\newtheorem{theorem}{Theorem}
\newtheorem{lemma}[lemma]{Lemma}
\title[]{Totally Corrective Boosting with Cardinality Penalization}
 \author{\Name{Vasil S. Denchev} \Email{denchev@google.com}\\
 \addr Google, USA
 \AND
 \Name{Nan Ding} \Email{dingnan@google.com}\\
 \addr Google, USA
 \AND
 \Name{Shin Matsushima} \Email{shin\_matsushima@mist.i.u-tokyo.ac.jp}\\
  \addr University of Tokyo, Japan
  \AND
  \Name{S.V.N. Vishwanathan} \Email{vishy@ucsc.edu}\\
  \addr University of California, Santa Cruz, USA
  \AND
  \Name{Hartmut Neven} \Email{neven@google.com}\\
  \addr Google, USA
 }
\begin{document}

\maketitle

\begin{abstract}
We propose a totally corrective boosting algorithm with explicit cardinality regularization. The resulting combinatorial optimization problems are not known to be efficiently solvable with existing classical methods, but emerging quantum optimization technology gives hope for achieving sparser models in practice. In order to demonstrate the utility of our algorithm, we use a distributed classical heuristic optimizer as a stand-in for quantum hardware. Even though this evaluation methodology incurs large time and resource costs on classical computing machinery, it allows us to gauge the potential gains in generalization performance and sparsity of the resulting boosted ensembles. Our experimental results on public data sets commonly used for benchmarking of boosting algorithms decidedly demonstrate the existence of such advantages. If actual quantum optimization were to be used with this algorithm in the future, we would expect equivalent or superior results at much smaller time and energy costs during training. Moreover, studying cardinality-penalized boosting also sheds light on why unregularized boosting algorithms with early stopping often yield better results than their counterparts with explicit convex regularization: Early stopping performs suboptimal cardinality regularization. The results that we present here indicate it is beneficial to explicitly solve the combinatorial problem still left open at early termination.
\end{abstract}
\begin{keywords}
boosting, ensemble methods, sparsity, cardinality penalization, non-convex optimization, quantum optimization
\end{keywords}

\section{Introduction}\label{intro}
Boosting algorithms execute the following protocol in each iteration \citep{FreSch97,Freund98}. The algorithm provides a distribution $\vc{u} \in \mathbb{R}_{\geq 0}^m$ on a given set of $m$ training examples. Then an {\em oracle} provides a {\em weak hypothesis} from some hypothesis class and the distribution is updated.  At the end, the algorithm arrives at a linear combination $\vc{w}^\star \in \mathbb{R}_{\geq 0}^n$ of weak hypotheses, where $n$ is the number of all possible weak hypotheses. One can view boosting as a zero-sum game between a row and a column player \citep{Warmuth06}. Each possible hypothesis provided by the oracle is a column of an underlying game matrix $\mathcal{H}  \in \{-1,1\}^{m\times n}$ that represents the entire hypothesis class available to the oracle.  The training examples correspond to rows $\mathcal{H}_{i:}$ of this matrix. $y_{i} \mathcal{H}_{i:} \vc{w}$ is the {\em margin} of example $i$ with label $y_i$ w.r.t. the linear combination $\vc{w}$ of the weak hypotheses in $\mathcal{H}$. From an optimization perspective, one can view different boosting algorithms as \emph{column generation} approaches \citep{Bennett02} for solving regularized risk minimization:
\begin{equation}\label{convex_primal}
\min_{\vc{w}} \sum_{i=1}^{m} l(y_i\mathcal{H}_{i:} \vc{w}) + \nu\Omega(\vc{w}) \quad \mbox{ s.t. } \vc{w} \succeq \vc{0} \enspace ,
\end{equation}
where $l(\cdot)$ and $\Omega(\cdot)$ denote loss function and regularization typically assumed to be convex. The number of columns in $\mathcal{H}$ is in principle unbounded, but it is finite for all finite data sets in practice.

Popular boosting algorithms commonly choose the $\ell_{1}$ penalty as regularization, and hence, we expect that the minimizer $\vc{w}^\star$ is sparse. In fact, good boosting algorithms come with theoretical proofs showing that a $\tilde{\vc{w}}$ that is $\epsilon$-close to $\vc{w}^\star$ contains $O(1/\epsilon^2)$ non-zero entries \citep{Warmuth08}. At this point it is worthwhile to make a distinction between \emph{corrective} and \emph{totally corrective} boosting algorithms. Corrective boosting algorithms such as AdaBoost only update one coordinate of $\vc{w}$ at every iteration. On the other hand, totally corrective algorithms such as ERLPBoost \citep{Warmuth08} update all active coordinates at every iteration. Namely, in the $t$-th iteration they solve a $t$-dimensional optimization problem. Experimental evidence suggests that totally corrective algorithms produce significantly sparser solutions than corrective algorithms. This leads to the central question of this paper: What if we explicitly enforced on $\vc{w}$ a sparsity-inducing \emph{cardinality penalization} (CP), also known as the $\ell_0$ pseudo-norm \citep{Bach11}, in order to produce solutions with as few active weak hypotheses as possible? This requires solving
\begin{equation}
\min_{\vc{w}} \sum_{i=1}^{m} l(y_i \mathcal{H}_{i:} \vc{w}) + \nu\vc{1}^\top\vc{w} +\lambda\bm{\mbox{card}}(\vc{w}) \quad \mbox{ s.t. }  \vc{w} \succeq \vc{0} \enspace ,
\end{equation}
where $\bm{\mbox{card}}(\vc{w}) =\left|\{i : w_i \neq 0\}\right|$ counts the number of nonzero elements in $\vc{w}$, and the $\ell_{1}$ term with a negligibly small but nonzero $\nu$ avoids ill-posed optimization problems.

The motivation for our work is twofold. First, the emergence of commercial quantum optimization technology offers hope that in the near future one will be able to directly solve problems involving CP.\footnote{For now we only experiment with a costly heuristic solver distributed over classical machines.} Second, there is a definite need to deploy high-accuracy classifiers in resource-constrained environments ranging from mobile and wearable devices (see Subsection~\ref{glass}) to micro drones and deep-space probes. Because latency, power usage, and processor cycles at prediction time are critical in such applications, cardinality penalization in boosted ensembles can come with disproportionate payoffs in terms of the practical results that suddenly become possible.

The paper is organized as follows. Section~\ref{related_work} reviews background. In Sect.~\ref{tqboost} we first study the impact of CP on boosting by deriving the Lagrange dual function and then state and discuss the \emph{TotalQBoost} algorithm. In Sect.~\ref{expe_setup},~\ref{results}, and~\ref{conclusion} we present the experimental setup, list results, and conclude.

\section{Background}\label{related_work}
{\bf Boosting}: \citet{CShen10,CShen13}, inspired by the seminal work of \citet{Mason99}, cast boosting as general regularized risk minimization over the space $\mathcal{H}$ of all possible columns. Boosting algorithms can in principle incorporate all sensible loss functions and regularizers provided the Lagrange duals of the corresponding primal problems can be derived and analyzed. Lagrange duality is a useful tool in the theoretical treatment because the idea of \emph{column generation} (CG) is intimately tied to it. \citet{Bennett02} established the CG view of boosting and showed how it can be used in convergence analysis and derivation of termination criteria for the resulting practical algorithms. Moreover, when strong duality holds, the algorithm designer can opt for putting the burden of optimization on the dual rather than the primal problem if practical benefits are seen. The regularized risk minimization perspective emphasizes the similarity between boosting and kernel methods. Boosting, via linear combinations of weak hypotheses from $\mathcal{H}$, also effectively maps the original data features onto a high-dimensional space, thus enabling the construction of non-linear decision surfaces in the original space.

{\bf Cardinality penalization}: As the ultimate sparsity promoter, CP has been considered many times in the context of various machine learning problems. However, researchers have avoided dealing with it directly because it leads to combinatorial problems not known to be solvable in polynomial time by classical computing. \citet{Pilanci} propose a convex relaxation for CP and use it in recovering sparse probability measures over moment constraints and clustering via sparse Gaussian mixtures; \citet{Borwein} propose a convex relaxation consisting of entropic regularization of the zero metric and study its convergence to sparse solutions; \citet{TongZhang2012} survey existing statistical theory results on non-convex regularization and construct a general theoretical framework for them; \citet{Bach11} present an extensive treatise on optimization tools and techniques dedicated to sparsity-inducing penalties.

{\bf Quantum optimization}: D-Wave Systems, Inc. develops quantum optimization hardware. The D-Wave machines are not universal quantum computers as their specialized hardware is designed to take advantage of limited quantum effects at finite temperature on a restricted Ising model \citep{Brush}. This is a departure from the idealistic adiabatic quantum algorithm of \citet{Farhi2001} but to date is the only practical design to ever achieve ${\sim} 1000$ functional qubits. Up to now there have been several studies, e.g. \citet{Lanting14} and \citet{Boixo14}, thoroughly characterizing the underlying quantum effects that are regarded by the wider quantum computing community as necessary for achieving computational speedups over classical algorithms. Simultaneously, there are ongoing efforts to systematically characterize the actual speedup that the D-Wave machines can offer \citep{Ronnow14}. There is also prior machine learning research using quantum optimization: robust classification with a non-convex loss function that is compatible with the D-Wave architecture \citep{Denchev12}; a heuristic boosting algorithm with CP \citep{Neven12}; and training a detector for cars in digital images with a D-Wave optimizer \citep{CarDetector}. In general, the potential advantage of adiabatic quantum optimization (AQO) over classical algorithms is best understood by comparison with its closest classical analogue---simulated thermal annealing \citep{Denchev13}. As illustrated in Fig.~\ref{fig:quantum_annealing}, AQO has at its disposal \emph{quantum tunneling}, which is a fundamentally non-classical resource for escaping local minima in hard optimization problems. While thermal annealing may get trapped in a deep local minimum and not have enough thermal energy to jump over the tall potential barrier, quantum annealing has a better chance of reaching the global minimum with the help of tunneling.

\begin{figure}[t!]
\centering
\includegraphics[width=5cm, height=5.5cm]{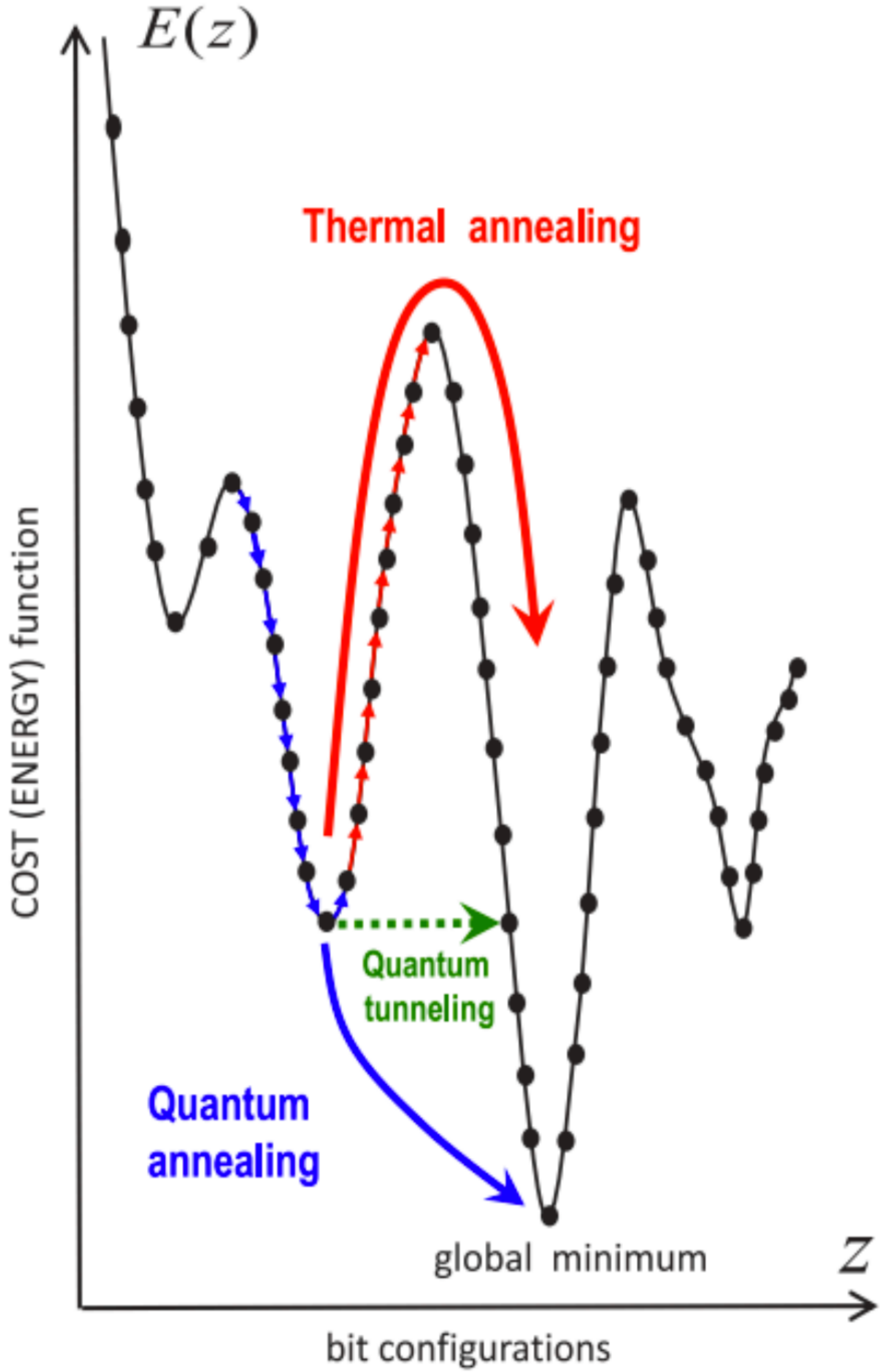}
\caption{Reprinted from \citet{Smelyanskiy2012}. A possible optimization landscape where thermal annealing initially gets trapped in a local minimum. Its only escape route is over the barrier via thermal fluctuations. In contrast, quantum annealing can escape the local minimum via quantum tunneling.}
\label{fig:quantum_annealing}
\end{figure}

\section{Totally Corrective Boosting with CP}\label{tqboost}
In Subsection~\ref{card_penal} we first set the stage for cardinality-penalized boosting by analyzing the Lagrange dual function. Surprisingly, we find that \emph{the dual is unaffected by CP in the primal}. The fascinating failure of CP to propagate its influence to the dual provides the verification that sample weights given by the dual variables are valid, and CG can work correctly in the presence of CP. \citet{Borwein} similarly observe that going from primal to dual erases all information about the behavior of CP. This counter-intuitive result requires investigation from different angles in order to be fully understood. To that end, in Appendix A and B we provide two additional arguments independently arriving at the same conclusion.

Subsection~\ref{column_generation} presents some more detailed background on CG in order to facilitate the understanding of the main algorithm. In Subsection~\ref{totalqboost_algorithm} we look at the duality gap created by the CP term in the primal problem and gain useful insights for the design of the boosting algorithm. Finally, in Subsections~\ref{early_stopping} and~\ref{discrete_variables} we relate early stopping to CP and discuss practical issues stemming from conversions between continuous and discrete variables necessitated by the architecture of quantum hardware.

\subsection{Lagrange Theory for Boosting with CP}\label{card_penal}
The primal problem for $\ell_1$- and cardinality-regularized risk minimization is
\begin{align}
  &\min_{\vc{w}, \vc{\gamma},\vc{\eta}} \sum_{i=1}^m l(\gamma_i) +
  \nu\vc{1}^\top\vc{w} + \lambda
  \bm{\mbox{card}}(\vc{\eta}) \label{primal_with_l0} \\
&\mbox{s.t.} \quad
  \gamma_i = y_i \mathcal{H}_{i:} \vc{w} \mbox{ (for } i = 1,\ldots,m\mbox{)},\quad
  \vc{\eta} = \vc{w}, \quad \vc{w} \succeq \vc{0} \mbox{ ,}\nonumber
\end{align}
where $l(\cdot)$ is a loss function; a column $\mathcal{H}_{:j}$ stores the $\{-1,1\}$ responses of the $j$-th weak classifier on all training examples; and dummy variables $\vc{\gamma}$ and $\vc{\eta}$ allow for arriving at a meaningful dual. The Lagrangian, subject to $\vc{p} \succeq \vc{0}$, is
\begin{equation}
  L= \left(\nu \vc{1}^\top + \lambda\vc{s}^\top - \vc{u}^\top \mbox{diag}\left(\vc{y}\right)\mathcal{H} - \vc{p}^\top\right)\vc{w} +   \vc{u}^\top\vc{\gamma}\nonumber\\
  + \sum_{i=1}^m l\left(\gamma_i\right) - \lambda
  \left(\vc{s}^\top\vc{\eta} - \bm{\mbox{card}}(\vc{\eta})\right)
  \enspace .
\end{equation}
To find the infimum over primal variables $\vc{w}$ and $\vc{\gamma}$, we set derivatives to zero and obtain:
\begin{align}
&\vc{u}^\top \mbox{diag}\left(\vc{y}\right)\mathcal{H} \preceq \nu\vc{1}^\top + \lambda\vc{s}^\top\label{constraints}\\
&u_i = -l'\left(\gamma_i\right) , \forall i \enspace .\label{dual_variables}
\end{align}
Equation \eqref{dual_variables} means that a dual variable $u_i$ is exactly the
negative derivative of the loss function $l$ at margin $\gamma_i$. With
\eqref{constraints} satisfied and $l^*\left(-u_i\right)$ and
$\bm{\mbox{card}}^*(\vc{s})$ denoting the Fenchel conjugates of
$l\left(\gamma_i\right)$ and $\bm{\mbox{card}}(\vc{w})$, respectively,
\begin{equation}
\inf_{\vc{w},\vc{\gamma},\vc{\eta}} L = -\sum_{i=1}^m l^*\left(-u_i\right) - \lambda\bm{\mbox{card}}^*(\vc{s})\enspace .
\end{equation}
Hence, the dual is
\begin{equation}\label{dual2}
\min_{\vc{u},\vc{s}} \sum_{i=1}^{m} l^*\left(-u_i\right) + \lambda\bm{\mbox{card}}^*(\vc{s}) \quad \mbox{s.t. } \eqref{constraints}\enspace .
\end{equation}
It appears $\bm{\mbox{card}}(\vc{\eta})$ from the primal impacts the dual via $\bm{\mbox{card}}^*(\vc{s})$ and
$\vc{s}$. However, Lemma~\ref{fenchel_conjugate} shows $\bm{\mbox{card}}^*(\vc{s}) = 0$ and $\vc{s} = \vc{0}$.
\begin{lemma}\label{fenchel_conjugate}
  The Fenchel conjugate of CP is
\begin{equation*}
  \bm{\mbox{card}}^*(\vc{s}) = \sup_{\vc{w} \in
    \mathbb{R}^n}\,\vc{s}^\top \vc{w} -\bm{\mbox{card}}(\vc{w}) =
	\begin{cases}
		0 & \mbox{ for } \vc{s} = \vc{0}\\
		\infty & \mbox{ otherwise}
	\end{cases}
\end{equation*}
\end{lemma}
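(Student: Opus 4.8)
The plan is to dispense with convex-analytic machinery and simply evaluate the supremum directly in the two regimes $\vc{s} = \vc{0}$ and $\vc{s} \neq \vc{0}$ that appear on the right-hand side; $\bm{\mbox{card}}$ is elementary enough that a case split settles everything.

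First I would handle $\vc{s} = \vc{0}$. Here the objective collapses to $-\bm{\mbox{card}}(\vc{w})$. Since $\bm{\mbox{card}}(\vc{w}) \geq 0$ for every $\vc{w} \in \mathbb{R}^n$ and $\bm{\mbox{card}}(\vc{w}) = 0$ exactly when $\vc{w} = \vc{0}$, the supremum equals $0$ and is attained at $\vc{w} = \vc{0}$.

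Next I would treat $\vc{s} \neq \vc{0}$ by exploiting the mismatch in homogeneity between the two terms: $\bm{\mbox{card}}$ is invariant under positive rescaling of coordinates, whereas $\vc{s}^\top\vc{w}$ is linear. Concretely, pick an index $k$ with $s_k \neq 0$ and consider the one-parameter family $\vc{w}(t) = t\,\sign(s_k)\,\vc{e}_k$ for $t > 0$, where $\vc{e}_k$ is the $k$-th coordinate vector. Along this family $\bm{\mbox{card}}(\vc{w}(t)) = 1$ while $\vc{s}^\top\vc{w}(t) = t\,\abs{s_k}$, so the objective equals $t\,\abs{s_k} - 1$, which tends to $+\infty$ as $t \to \infty$; hence $\bm{\mbox{card}}^*(\vc{s}) = \infty$.

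There is essentially no hard step here: the only thing to notice is the scaling argument in the second case, which turns the discontinuity and non-convexity of $\bm{\mbox{card}}$ — usually the source of difficulty when manipulating it — into a triviality, because the penalty stays bounded (equal to $1$ along the chosen ray) while the linear reward grows without bound. One could alternatively observe that $\bm{\mbox{card}}$ is positively homogeneous of degree $0$ away from the coordinate hyperplanes, so its conjugate must be the indicator of $\{\vc{0}\}$; but the explicit two-case verification above is the shortest fully self-contained route and is the one I would write down.
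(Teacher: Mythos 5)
Your proof is correct and follows essentially the same route as the paper's: a direct case split in which the $\vc{s}=\vc{0}$ case is immediate and the $\vc{s}\neq\vc{0}$ case exploits that $\bm{\mbox{card}}$ stays bounded while the linear term is unbounded (the paper bounds the cardinality by $n$ over all of $\mathbb{R}^n$, whereas you bound it by $1$ along an explicit coordinate ray — an inessential difference).
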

\begin{proof}
For $\vc{s} = \vc{0}$, $\displaystyle \bm{\mbox{card}}^*(\vc{s}) = \sup_{\vc{w}}\, -\bm{\mbox{card}}(\vc{w}) = 0$. For any $\vc{s} \neq \vc{0}$, since $\bm{\mbox{card}}(\vc{w}) \leq n$, $\displaystyle \bm{\mbox{card}}^*(\vc{s}) = \sup_{\vc{w}}\,\vc{s}^\top \vc{w} -\bm{\mbox{card}}(\vc{w}) \geq \sup_{\vc{w}}\, \vc{s}^\top \vc{w} - n = \infty$.
\end{proof}

Equation~\eqref{dual2} together with Lemma~\ref{fenchel_conjugate} prove that the dual problem is oblivious to CP:
\setcounter{theorem}{0}
\begin{theorem}\label{identical_duals}
Equations~\eqref{primal_with_l0} and~\eqref{convex_primal} with $\Omega(\vc{w}) = \vc{1}^\top\vc{w}$ have identical Lagrange duals.
\end{theorem}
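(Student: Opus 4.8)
The plan is to compute the Lagrange dual of \eqref{convex_primal} with $\Omega(\vc{w}) = \vc{1}^\top\vc{w}$ and check that it coincides --- in variable, objective, and constraint set --- with the dual of \eqref{primal_with_l0} once Lemma~\ref{fenchel_conjugate} is used to kill the $\bm{\mbox{card}}^*$ term. First I would repeat the derivation that produced \eqref{dual2}, but now for \eqref{convex_primal}: introduce the same dummy margins $\gamma_i = y_i\mathcal{H}_{i:}\vc{w}$, attach multipliers $\vc{u}$ to those equalities and $\vc{p}\succeq\vc{0}$ to $\vc{w}\succeq\vc{0}$, and form the Lagrangian. Minimizing over $\vc{w}$ forces the coefficient of $\vc{w}$ to vanish, which after eliminating $\vc{p}\succeq\vc{0}$ gives exactly $\vc{u}^\top\mbox{diag}(\vc{y})\mathcal{H}\preceq\nu\vc{1}^\top$; minimizing over $\vc{\gamma}$ produces $-\sum_{i=1}^m l^*(-u_i)$ by Fenchel conjugation, together with $u_i = -l'(\gamma_i)$ just as in \eqref{dual_variables}. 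Hence the dual of \eqref{convex_primal} is $\min_{\vc{u}}\sum_{i=1}^m l^*(-u_i)$ subject to $\vc{u}^\top\mbox{diag}(\vc{y})\mathcal{H}\preceq\nu\vc{1}^\top$.

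Next I would simplify the dual \eqref{dual2} of \eqref{primal_with_l0}. By Lemma~\ref{fenchel_conjugate}, $\lambda\,\bm{\mbox{card}}^*(\vc{s})$ equals $0$ at $\vc{s}=\vc{0}$ and $+\infty$ for every other $\vc{s}$, so the objective of \eqref{dual2} is $+\infty$ whenever $\vc{s}\neq\vc{0}$ and the outer minimization effectively fixes $\vc{s}=\vc{0}$. Substituting $\vc{s}=\vc{0}$ into \eqref{constraints} turns it into $\vc{u}^\top\mbox{diag}(\vc{y})\mathcal{H}\preceq\nu\vc{1}^\top+\lambda\vc{0}^\top=\nu\vc{1}^\top$, so \eqref{dual2} collapses to $\min_{\vc{u}}\sum_{i=1}^m l^*(-u_i)$ subject to $\vc{u}^\top\mbox{diag}(\vc{y})\mathcal{H}\preceq\nu\vc{1}^\top$ --- the same minimization, in the same variable, with the same objective and the same feasible set as the dual of \eqref{convex_primal} found above, which is precisely the claim. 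Equivalently, one can note that \eqref{convex_primal} with $\Omega=\vc{1}^\top\vc{w}$ is just \eqref{primal_with_l0} with $\lambda=0$ modulo the then-inert variable $\vc{\eta}$, and that setting $\lambda=0$ in \eqref{dual2} removes $\vc{s}$ outright.

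I do not expect a genuine obstacle here: the statement is essentially a corollary of \eqref{dual2} and Lemma~\ref{fenchel_conjugate}, and all the conceptual content --- that CP leaves no trace in the dual --- is carried by the lemma. The only points deserving care are bookkeeping ones: justifying that the extended-real objective of \eqref{dual2} is never optimized off $\{\vc{s}=\vc{0}\}$ (immediate, since $\bm{\mbox{card}}^*$ takes only the values $0$ and $+\infty$), and re-deriving the dual of \eqref{convex_primal} in a form that matches the reduced \eqref{dual2} term by term. Note also that the theorem asserts only equality of the two dual \emph{problems}, so no strong-duality argument is required.
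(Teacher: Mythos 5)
Your proposal is correct and follows essentially the same route as the paper: the authors likewise treat the theorem as an immediate consequence of the dual \eqref{dual2} together with Lemma~\ref{fenchel_conjugate}, which forces $\vc{s}=\vc{0}$ and collapses \eqref{dual2} to the standard $\ell_1$-regularized dual $\min_{\vc{u}}\sum_i l^*(-u_i)$ subject to $\vc{u}^\top\mbox{diag}(\vc{y})\mathcal{H}\preceq\nu\vc{1}^\top$. Your explicit re-derivation of the dual of \eqref{convex_primal} and the remark about the inert $\vc{\eta}$ at $\lambda=0$ are just slightly more detailed bookkeeping than the paper provides, not a different argument.
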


\subsection{Column Generation}\label{column_generation}
As discussed in Sect.~\ref{intro}, $\mathcal{H}$ is assumed to be finite but very large, and thus, it is impractical to directly solve regularized risk minimization over $\mathcal{H}$ even when convexity holds. The role of CG, which serves as the fundamental optimization framework for boosting algorithms, is to solve the full optimization problem over $\mathcal{H}$ in incremental steps by always considering only a small subset of active columns $H \subset \mathcal{H}$ and augmenting $H$ one column at a time. When convex duality holds, only columns with nonzero weights in the optimal solution $\vc{w}^\star$ need to ever be explicitly considered as part of $H$, so most columns in $\mathcal{H}$ can safely be ignored throughout the entire algorithm \citep{Bennett02}.

Hence, at iteration $t$ a $t$-dimensional problem, known as \emph{restricted master problem} (RMP), is optimized. After solving an RMP, the algorithm generates the next RMP by finding a column from $\mathcal{H}$ that violates a constraint in \eqref{constraints}. When convex duality holds we are guaranteed that for all columns that are already in $H$, solving the current RMP also satisfies the corresponding dual constraints in \eqref{constraints}. Termination occurs when no such violation by more than $\epsilon > 0$ can be found anymore:
\begin{equation}\label{l1_termination}
\nexists i \quad \mbox{such that} \quad \vc{u}^\top \mbox{diag}\left(\vc{y}\right)\mathcal{H}_{:i} > \nu + \epsilon \mbox{ ,}
\end{equation}
where $\nu$ is the $\ell_1$ regularization coefficient.

From this perspective, solving primal RMPs in successive iterations corresponds to solving increasingly tightened relaxations of the full dual problem over $\mathcal{H}$: Violated dual constraints for columns in $\mathcal{H} - H$ are brought into consideration and satisfied one at a time. Upon termination, $\vc{w}^\star$ implicitly gives zero weights to columns still in $\mathcal{H} - H$ because their dual constraints are satisfied to within $\epsilon$. Optimality follows from the fact that at termination we have $\epsilon$-approximate primal and dual feasibility with equal objective values on the full problem over $\mathcal{H}$.

\subsection{TotalQBoost}\label{totalqboost_algorithm}
In this paper we insist on directly optimizing the primal cardinality-penalized RMP at each boosting iteration. However, CP makes the problem non-convex and destroys strong duality. With that, we do not have any guarantee that CG will converge to the global minimum of the cardinality-penalized risk minimization problem even if all RMPs at successive boosting iterations are solved to optimality. We also obtain a technical complication from the fact that solving a non-convex primal RMP to optimality does not guarantee the satisfaction of all dual constraints \eqref{constraints} corresponding to columns in $H$. Even so, CG in conjunction with solving individual RMPs to optimality enables us to take steps and overcome local minima in the high-dimensional non-convex space of \eqref{primal_with_l0}. Even if we fail to reach the global minimum of \eqref{primal_with_l0}, solving non-convex problems along the way is highly non-trivial optimization work that results in better boosted ensembles than what is possible with convex methods.

When strong duality does not hold, a quantity of interest is the duality gap, i.e. the difference between the optimal values of the primal and dual problems. Fortunately, due to the fact that the dual does not change under CP, we can determine a bound on the duality gap solely on the basis of the impact of the CP term in the primal. This also gives insight into the design of the algorithm that follows.
\begin{theorem}
The duality gap $\delta$ between primal \eqref{primal_with_l0} and dual \eqref{dual2} is $\delta \geq \lambda\bm{\hbox{card}}(\vc{w})$ with equality holding for $\lambda$ such that
\begin{equation}\label{equality_holding}
\arg\min_{\vc{w} \succeq \vc{0}} \sum_{i=1}^{m} l(y_i \mathcal{H}_{i:} \vc{w}) + \nu\vc{1}^\top\vc{w} +\lambda\bm{\mbox{card}}(\vc{w})
=\arg\min_{\vc{w} \succeq \vc{0}} \sum_{i=1}^{m} l(y_i \mathcal{H}_{i:} \vc{w}) + \nu\vc{1}^\top\vc{w} \enspace .
\end{equation}
\end{theorem}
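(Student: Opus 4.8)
The plan is to leverage Theorem~\ref{identical_duals}: problems \eqref{primal_with_l0} and \eqref{convex_primal} with $\Omega(\vc{w})=\vc{1}^\top\vc{w}$ possess the \emph{same} Lagrange dual \eqref{dual2}, so their common dual optimal value is a single number $d^\star$. First I would note that \eqref{convex_primal} is a feasible convex program whose only constraints are the affine relations $\gamma_i=y_i\mathcal{H}_{i:}\vc{w}$, $\vc{\eta}=\vc{w}$, and $\vc{w}\succeq\vc{0}$; hence strong duality holds and $d^\star=q^\star$, where $q^\star$ denotes the optimal value of the $\ell_1$-regularized problem $\min_{\vc{w}\succeq\vc{0}}\sum_i l(y_i\mathcal{H}_{i:}\vc{w})+\nu\vc{1}^\top\vc{w}$ appearing on the right-hand side of \eqref{equality_holding}. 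By contrast \eqref{primal_with_l0} is non-convex, so its optimal value $p^\star$ may exceed $d^\star$, and the gap in question is $\delta=p^\star-d^\star=p^\star-q^\star$.

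For the inequality I would pick a minimizer $\vc{w}^\star$ of \eqref{primal_with_l0}. Since $\vc{w}^\star\succeq\vc{0}$ is feasible for the $\ell_1$-only problem, evaluating its objective there gives
\[
q^\star \;\le\; \sum_{i=1}^m l(y_i\mathcal{H}_{i:}\vc{w}^\star) + \nu\vc{1}^\top\vc{w}^\star \;=\; p^\star - \lambda\,\bm{\mbox{card}}(\vc{w}^\star),
\]
where the equality simply peels the cardinality term off the optimal CP objective. Rearranging yields $\delta = p^\star-q^\star \ge \lambda\,\bm{\mbox{card}}(\vc{w}^\star)$, the claimed bound (and consistent with $\delta\ge 0$ since $\bm{\mbox{card}}(\cdot)\ge 0$).

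For the equality statement, observe that the single inequality just used is tight exactly when $q^\star = \sum_i l(y_i\mathcal{H}_{i:}\vc{w}^\star)+\nu\vc{1}^\top\vc{w}^\star$, i.e.\ when the CP minimizer $\vc{w}^\star$ is \emph{also} a minimizer of the $\ell_1$-only objective. This is precisely hypothesis \eqref{equality_holding}: for those $\lambda$ for which the two argmin sets coincide, every optimal $\vc{w}^\star$ of \eqref{primal_with_l0} minimizes the $\ell_1$ problem as well, so the chain of (in)equalities collapses to $\delta = \lambda\,\bm{\mbox{card}}(\vc{w}^\star)$.

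The only genuinely delicate point is the appeal to strong duality for \eqref{convex_primal} that identifies $d^\star$ with $q^\star$; I would dispatch it by invoking the standard fact that a feasible convex program with affine equality and inequality constraints has zero duality gap (no Slater condition needed, as the constraints are polyhedral), possibly together with mild regularity on $l$ (proper, closed; e.g.\ exponential or logistic loss) so that the conjugate-form dual \eqref{dual2} is indeed the attained Lagrange dual. Everything else is bookkeeping: splitting $p^\star$ into its $\ell_1$ part plus $\lambda\,\bm{\mbox{card}}(\vc{w}^\star)$ and comparing against the same $\vc{w}^\star$ as a feasible point of the $\ell_1$ problem.
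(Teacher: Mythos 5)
Your proposal is correct and follows essentially the same route as the paper's proof: invoke Theorem~\ref{identical_duals} together with strong duality of the $\ell_1$-only convex problem to identify the dual optimum with the $\ell_1$ optimal value, then compare objective values at the CP minimizer, with equality exactly when that minimizer also solves the $\ell_1$ problem. Your version merely makes explicit (via feasibility of $\vc{w}^\star$ in the $\ell_1$ problem and the strong-duality justification for affine constraints) the steps the paper states informally.
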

\begin{proof}
By Theorem~\ref{identical_duals}, the dual of the primal with CP is the same as the dual of the primal without CP. Moreover, because we are assuming a convex loss function, strong duality holds for the latter primal problem and its duality gap is zero. Hence, if the optimal solution to the primal with CP (left-hand side of \eqref{equality_holding}) gives the best empirical risk under $\ell_1$ regularization only (right-hand side of \eqref{equality_holding}), then $\delta = \lambda\bm{\mbox{card}}(\vc{w})$ because this is the difference in objective values at the common minimum. Alternatively, if $\lambda$ is so large that the optimal solution to the primal with CP no longer gives the best empirical risk under $\ell_1$ regularization, then $\delta >\lambda\bm{\mbox{card}}(\vc{w})$.
\end{proof}

Figure~\ref{fig:duality_gap}, \emph{Left} shows the case of weak CP in which the location of the global minimum is the same as the minimum of the problem without CP. In this regime if the loss function $l(\cdot)$ is strictly convex, the term $\lambda\bm{\mbox{card}}(\vc{w})$ does not have any impact on the output of the boosting algorithm because the minimum of the $\ell_1$-regularized risk is unique. On the other hand, Fig.~\ref{fig:duality_gap}, \emph{Right} illustrates the case of strong CP, which causes the location of the global minimum to be no longer the same as the minimum of the convex primal. Consequently, the corresponding dual point may be suboptimal and infeasible, which allows for possible violations of dual constraints \eqref{constraints} corresponding to columns in $H$ whose weights are zeroed out by CP. In other words, by forcing some weights in the current RMP to zero, the CP term leaves the corresponding dual constraints just as violated as they were before adding these columns to $H$. Since these dual constraints are left violated by the current RMP, the weak learner oracle may try to offer the corresponding columns repeatedly in subsequent iterations. However, CG still proceeds in a well defined manner if we prevent the oracle from offering for the next iterations columns that have already been added to $H$ but have been forced to zero weights by the CP term when solving subsequent RMPs.

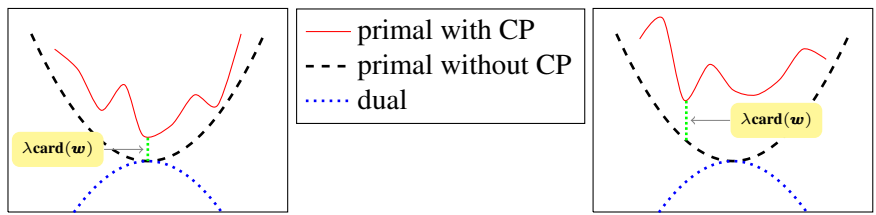
\begin{figure}[t!]
\centering
\tikzset{every mark/.append style={scale=0.5},every pin/.style={fill=yellow!50!white,rectangle,rounded corners=3pt,font=\tiny},small dot/.style={fill=black,circle,scale=0.3},arrow/.style={arrows=<->}}
\pgfplotsset{every axis/.append style={height=4.29cm,width=5.3cm}}
\begin{tikzpicture}
   \begin{axis} [name=plot1,ymin=-10,ymax=30,smooth,xtick=\empty,ytick=\empty,legend entries={primal with CP, primal without CP,dual}, legend style={cells={anchor=west}, legend pos=outer north east}]
    \addplot [color=red,mark=none] coordinates {(-4,22) (-3,18) (-2,10) (-1,15) (-0.2,5) (1,7) (2,13) (3,11) (4,25)};
    \addplot [color=black,mark=none,style=dashed,line width=1pt]{x^2};
    \addplot [color=blue,mark=none,style=dotted,line width=1pt]{-x^2};
    \addplot [color=green,mark=none,densely dotted,line width=1pt] coordinates {(0,4.6) (0,0)};
    \node[coordinate,pin={[pin edge={<-}]left:{$\lambda\bm{\mbox{card}}(\vc{w})$} } ] at (axis cs:-0.2,2.3) {};
  \end{axis}
  \begin{axis} [name=plot2,ymin=-10,ymax=30,smooth,at={($(plot1.east)+(4.06cm,0)$)},anchor=west,xtick=\empty,ytick=\empty]
    \addplot [color=red,mark=none] coordinates {(-4,24) (-3,28) (-2.1,12) (-1,19) (0,14) (1,13) (2,16) (3,22) (4,20)};
    \addplot [color=black,mark=none,style=dashed,line width=1pt]{x^2};
    \addplot [color=blue,mark=none,style=dotted,line width=1pt]{-x^2};
    \addplot [color=green,mark=none,densely dotted,line width=1pt] coordinates {(-2,11.85) (-2,4)};
    \node[coordinate,pin={[pin edge={<-}]right:{$\lambda\bm{\mbox{card}}(\vc{w})$} } ] at (axis cs:-1.8,8) {};
  \end{axis}
\end{tikzpicture}
 \caption{Stylized depictions of primal and dual functions for intuitive understanding. \emph{Left:} Global minimum of primal with CP corresponds to the best empirical risk under $\ell_1$ regularization only (small $\lambda$). In this case the duality gap is $\lambda\bm{\mbox{card}}(\vc{w})$; \emph{Right:} Global minimum of primal with CP is different from the best empirical risk under $\ell_1$ regularization only (large $\lambda$). The duality gap is greater than $\lambda\bm{\mbox{card}}(\vc{w})$.}
\label{fig:duality_gap}
\end{figure}

In this way, because the CP term forces successive RMPs to carry over violated dual constraints, the usual termination by dual constraint satisfaction of the $\ell_1$ regularizing condition \eqref{l1_termination} is abandoned. The CP term may simply be so strong that it could prevent some violated dual constraints from ever becoming satisfied. Even so, we can go on with CG iterations while blacklisting generated columns until the oracle is no longer able to offer any previously unseen columns. If the weak learner dictionary is of such a large size that termination does not occur even after many stagnant iterations of RMP solutions sticking to an old subset of generated columns while rejecting recent ones, then early stopping can be applied in order to quit solving primal RMPs that might have grown too large by that point. This is stated as Algorithm~\ref{alg:totalqboost}: \emph{TotalQBoost}.

\begin{algorithm}[h!]
\caption{TotalQBoost}\label{alg:totalqboost}
\begin{algorithmic}[1]
\REQUIRE Loss function $l(\gamma)$, $m$ training examples $\{(\vc{x}, y)\}$, dictionary of weak classifiers $\{h(\vc{x})\}$, regularization parameters $\nu$ and $\lambda$, tolerance $\epsilon$, maximum iterations $T$
\ENSURE Boosted ensemble $(H,\vc{w})^\star$
\medskip
\small{
	\STATE Initialize: $u_i ={\frac{1}{m}}$, $\forall i = 1, \ldots, m$; empty ensemble $(H,\vc{w})$; $t = 0$
   	\WHILE {$t < T$}
	    		\STATE Optimize members of $\{h(\vc{x})\}$ with current $\vc{u}$
			\IF {$\{h(\vc{x})\} - H = \emptyset$ \OR eq. \eqref{l1_termination} holds}
				\STATE \textbf{break}
			\ELSE
	    			\STATE Select weak classifier by most violated constraint in eq.
				\eqref{constraints}:\\$\hat{h}(\vc{x}) = \arg\max_{h(\vc{x}) \in \{h(\vc{x})\} - H} \sum_{i=1}^m u_{i}y_{i}h(\vc{x}_i)$
				\STATE Add $\hat{h}(\vc{x})$ as a new active column: $H = \{H, \hat{h}(\vc{x})\}$
			\ENDIF
			\STATE Optimize eq. \eqref{primal_with_l0} with $\{(\vc{x}, y)\}, H, \nu, \lambda$ over discrete variables $\dot{\vc{w}}$
			\STATE Update $\vc{w} = \dot{\vc{w}}$
			\STATE Let $\tilde{H} = \{h_k(\vc{x})\}$ for $k$ such that $\dot{w}_k > 0$ 
			\STATE Optimize $\ell_1$-regularized risk with $\{(\vc{x}, y)\}, \tilde{H}, \nu$, and tolerance $\epsilon$ over continuous variables $\tilde{\vc{w}}$
			\STATE Copy the elements of $\tilde{\vc{w}}$ to corresponding elements of $\vc{w}$
			\STATE Update $t = t + 1$ and $u_i = -l'(y_iH_{i:}\vc{w})$
	\ENDWHILE
	\STATE $(H,\vc{w})^\star = (\{h_k(\vc{x})\},\{w_k\})$ for $k$ such that $w_k > 0$ 
}
\end{algorithmic}
\end{algorithm}

\subsection{Early Stopping}\label{early_stopping}
Even without an explicit CP term in the primal, early stopping puts the final solution in a local minimum of the cardinality-penalized primal for all CP coefficients $\lambda > 0$. This is so because early stopping ensures all columns that could yet be generated are left with zero weights in the final ensemble. However, any point in the primal space with at least one zero-weighted coordinate is a local minimum for $\bm{\mbox{card}}(\vc{w})$.

Consider an unregularized CG algorithm such as AdaBoost, which is much simpler and in practice has been known to give better generalization than the various studied boosting algorithms with explicit convex regularization. While the lack of explicit regularization in AdaBoost is bound to eventually cause overfitting as more and more columns are generated, early stopping acts as cardinality penalization. Early stopping simply limits the maximum number of columns that are to be generated by the algorithm. However, the cardinality penalization that early stopping provides is suboptimal, as the first $T$ generated columns are unlikely to be the optimal set of $T$ columns for minimizing empirical risk. 

\subsection{Discrete Weight Variables}\label{discrete_variables}
TotalQBoost uses discrete weight variables $\dot{\vc{w}}$ in line 10 of Algorithm~\ref{alg:totalqboost} because the quantum optimization hardware is engineered as an interconnected collection of physical qubits, each of which is regarded as a binary variable at the computational level. Because the current D-Wave hardware has just ${\sim} 1000$ functional qubits, each element of $\dot{\vc{w}}$ is also restricted to use only a small number of bits in a fixed-point representation that implements the discrete variables $\dot{\vc{w}}$ via binary expansions using the underlying qubits. A practical issue stemming from converting from continuous to discrete variables is that one may lose the ability to represent weight configurations at or close enough to the lowest attainable objective value on continuous variables. A possible fix for this is to keep increasing the bit-depth and adjusting the range of elements of $\dot{\vc{w}}$ until the best non-CP empirical risk over continuous variables can be reached to within some tolerance. We can then run cardinality-penalized discrete optimization. However, even though the necessary bit-depth is not expected to be very high \citep{Neven12}, it might still be possible for this to result in a total number of binary variables that is too large to optimize effectively. In that case the number of iterations $T$ that TotalQBoost can perform is limited by the size of our discrete optimization facilities. Assuming that the variables in $\dot{\vc{w}}$ have enough bit-depth for correctly selecting the best subset of columns, we still have the chance to subsequently refine the weights of the selected columns to minimize $\ell_1$-regularized empirical risk. To that end, as specified by lines 12 and 13 of Algorithm~\ref{alg:totalqboost}, for the CP-selected columns we run with continuous weight variables $\tilde{\vc{w}}$ an off-the-shelf convex optimization algorithm such as \emph{L-BFGS-B} \citep{Morales2011}.

\section{Experimental Setup}\label{expe_setup}
We compare two variants of convex CG and three variants of cardinality-penalized CG per data set.

{\bf A. $\ell_1$-regularized CG (L1CG)}: This is the ordinary $\ell_1$-regularized CG. We perform multiple runs to $\epsilon$-convergence with a variety of regularization coefficients $\nu$ and record the performance of the converged ensembles.

{\bf B. Unregularized CG (UCG) with early stopping}: This is also $\ell_1$-regularized CG but with a negligibly small yet nonzero regularization coefficient $\nu$. Nonzero $\nu$ is needed for avoiding ill-posed optimization problems in the case of separable data. Early stopping is applied when too many columns are generated before reaching the termination criterion \eqref{l1_termination}. We perform a single run up to a maximum number of iterations $T$ and record the performance of all intermediate ensembles in order to gauge the benefit of early stopping at different times.

{\bf C. Cardinality-penalized CG (CPCG)}: We apply TotalQBoost with a negligibly small but nonzero $\ell_1$ regularization coefficient $\nu$ and a variety of CP coefficients $\lambda$ and record the performance of the ensembles at termination. Depending on the structure of the dictionary of weak classifiers, the weak learner may run out of useful columns before we reach the global minimum of the full primal problem with CP over all possible columns \eqref{primal_with_l0}. Thus, we do not have any guarantee of reaching the global minimum of the full problem. On the other hand, if a pre-set maximum number of iterations $T$ is reached, we apply early stopping.

{\bf D. Hot-started CPCG}: Hot-starting is done for the purpose of possibly getting to a better local minimum than the one reached by {\bf C.} We first perform early-stopping UCG for $T'$ iterations, then initialize CPCG with these $T'$ columns, and continue as per the TotalQBoost algorithm. Termination likely results in a different local minimum than the one that {\bf C.} produces for the same $\lambda$. Here, too, we repeat for a variety of CP coefficients $\lambda$ and record performance after termination of each.

{\bf E. Subset selection}: We take the columns generated in the first phase of {\bf D.}, solve the primal RMP corresponding to \eqref{primal_with_l0} only once for all different $\lambda$ values, and record the performance of the resulting ensembles. This experiment is computationally cheaper than {\bf C.} and  {\bf D.} because we solve only a single discrete optimization problem per $\lambda$ value. The purpose is to gauge the suboptimality of ensembles produced by {\bf B.} as well as the worthiness of the extra effort exerted by {\bf D.}

TotalQBoost as well as experiments {\bf C.}, {\bf D.}, {\bf E.} described above assume the existence of a discrete optimization method that solves to optimality the combinatorial problems arising from including direct CP in the learning problem. The main motivation for attempting to study a machine learning algorithm giving rise to  combinatorial problems lies in the recent advances in physical implementations of scalable quantum computing, as introduced in Sect.~\ref{related_work}. However, because a sufficiently large quantum machine was not available to us at the time of writing, we resorted to a distributed heuristic method attempting to solve classically the problems that would otherwise be delegated to quantum hardware. Hence, it is important to note that a potential weakness of the experimental results obtained in this manner is that we have no guarantee the cardinality-penalized optimization problems are solved to optimality. Consequently, in the actual comparison between the five experiments, the results ascribed to experiments {\bf C.}-{\bf E.} are probably not optimal. Nevertheless, any advantages manifested in the results of {\bf C.}-{\bf E.} can certainly be taken as evidence for the potential performance improvements to be delivered by cardinality-penalized totally corrective boosting whenever quantum optimization hardware becomes available to the machine learning practitioner.

\section{Results}\label{results}
The classical heuristic method used as a stand-in for quantum optimization hardware is Multistart Tabu Search \citep{Palubeckis2004}. We implemented a specialized distributed version of it and ran all experiments on all data sets over the course of a week on a collection of 1,800 commodity-grade machines with 16 cores each. Even with such a large amount of computational power we do not have optimality guarantees when solving the cardinality-penalized optimization problems, but some reasonable amount of tuning is done to the Tabu algorithm in order to ensure a basic level of confidence in its solutions.

The dictionary of weak classifiers is constructed as a collection of decision stumps, each taking one of the original features in a given data set. In this work we focus on studying the effect of CP, so the choice of loss function is inconsequential. Hence, in all experiments we use the most common loss function for boosting---exponential loss.\footnote{Prior work considering D-Wave, e.g. \citet{Denchev12}, was concerned with hardware-compatible loss functions. Here we abandon this requirement due to the recent development of the ``Blackbox compiler," first used by \citet{McGeoch}.} The optimization tolerance $\epsilon$ used for L-BFGS-B and termination of L1CG is $5\cdot 10^{-4}$. In order to avoid large discrete optimization problems exceeding current optimization capabilities, in the event that another termination criterion is not met, we limit the maximum number of iterations for all methods to $T = 100$. The bit-depth for fixed-point discrete variables $\dot{\vc{w}}$ is chosen as 6, and their range is tentatively adjusted in successive iterations of TotalQBoost based on optimal values of the continuous variables $\vc{w}$ seen in L-BFGS-B solutions. The number of hot-starting columns $T'$ for experiments {\bf D.} and {\bf E.} is chosen on a per-data-set basis as shown in Appendix C.

\subsection{Results on Public Data Sets}
We perform the experiments listed in Sect.~\ref{expe_setup} on twelve public data sets commonly used for benchmarking of boosting algorithms. Due to the expensive nature of our discrete optimization, we can only afford one train-validation split of 80-20\% per data set. Because of that we do not have averaged test results, but we offer error rates produced on the validation set as indicators of generalization. A compressed representation of the results is shown in Fig.~\ref{fig:error_vs_card}. As a mock-up of how model selection is usually done via extensive cross-validation, when one or more of experiments {\bf A.}-{\bf B.} and {\bf C.}-{\bf E.} produce classifiers with the same cardinality for the same data set, we select a group representative according to the principle of Pareto efficiency and draw the resulting Pareto frontiers \citep{Kung75} respectively for L1CG/UCG and CP.

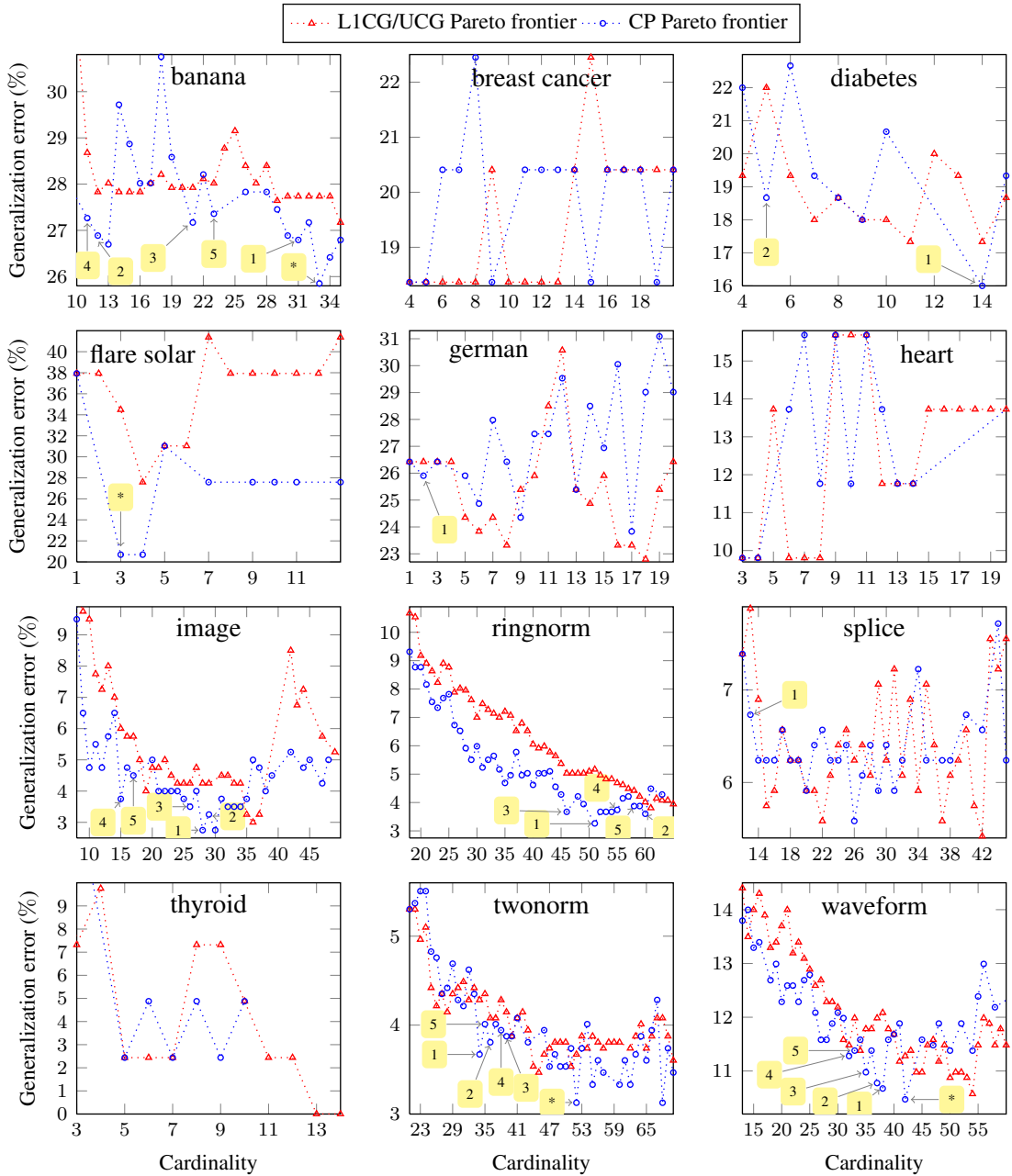
\begin{figure*}[t!]
\centering
\tikzset{every mark/.append style={solid,scale=0.7},every pin/.style={fill=yellow!50!white,rectangle,rounded corners=2pt,font=\tiny},arrow/.style={arrows=<->}}
\pgfplotsset{every axis/.append style={line width=0.5pt,height=4.9325cm,width=5.4cm,cycle list ={
{red,dotted,mark=triangle},
{blue,dotted,mark=o, mark options={solid,scale=0.55}}}}}
\begin{tikzpicture}                             
	\begin{axis} [name=plot1,legend columns=-1,legend entries={L1CG/UCG Pareto frontier, CP Pareto frontier},legend style={at={(0.78,1.05)},font=\footnotesize, anchor=south west},tick label style={font=\scriptsize},y label style={font=\footnotesize},ylabel=Generalization error ($\%$), xmin=10,xmax=35,ymin=25.8,ymax=30.8,xtick={10,13,...,50},ytick={26,27,...,33}]
	\node at (axis description cs:0.5,0.91) {banana};
	\node[coordinate,pin={[pin distance=0.4cm,pin edge={<-}]185:{1} } ] at (axis cs:30.6,26.75) {};
	\node[coordinate,pin={[pin distance=0.25cm,pin edge={<-}]300:{2} } ] at (axis cs:12.1,26.75) {};
	\node[coordinate,pin={[pin distance=0.4cm,pin edge={<-}]220:{3} } ] at (axis cs:20.5,27.1) {};
	\node[coordinate,pin={[pin distance=0.4cm,pin edge={<-}]270:{4} } ] at (axis cs:11,27.15) {};
	\node[coordinate,pin={[pin distance=0.3cm,pin edge={<-}]-90:{5} } ] at (axis cs:23,27.25) {};
	\node[coordinate,pin={[pin distance=0.12cm,pin edge={<-}]175:{*} } ] at (axis cs:32.6,25.88) {};
\addplot coordinates {(3,31.037700) (4,27.924500) (5,30.471700) (6,31.226400) (7,31.226400) (8,30.471700) (9,30.094300) (10,31.603800) (11,28.679200) (12,27.830200) (13,28.018900) (14,27.830200) (15,27.830200) (16,27.830200) (17,28.018900) (18,28.207500) (19,27.924500) (20,27.924500) (21,27.924500) (22,28.113200) (23,28.018900) (24,28.773600) (25,29.150900) (26,28.396200) (27,28.018900) (28,28.396200) (29,27.641500) (30,27.735800) (31,27.735800) (32,27.735800) (33,27.735800) (34,27.735800) (35,27.169800) (36,27.075500) (37,27.075500) (38,26.792500) (39,26.886800) (40,26.886800) (41,27.264200) (42,27.641500) (43,27.735800) (44,27.924500) (45,27.924500) (46,27.924500) (47,27.641500) (48,27.641500) (49,27.641500) (50,27.830200) };
\addplot coordinates {(3,28.773600) (4,27.924500) (6,31.226400) (9,28.113200) (11,27.264200) (12,26.886800) (13,26.698100) (14,29.717000) (15,28.867900) (16,28.018900) (17,28.018900) (18,30.754700) (19,28.584900) (21,27.169800) (22,28.207500) (23,27.358500) (26,27.830200) (28,27.830200) (29,27.452800) (30,26.886800) (31,26.792500) (32,27.169800) (33,25.849100) (34,26.415100) (35,26.792500) (36,26.698100) (37,26.226400) (38,26.037700) (39,27.169800) (40,28.113200) (43,27.547200) (44,26.981100) };
	\end{axis}
	
	\begin{axis} [name=plot2,at={($(plot1.east)+(1cm,0)$)},anchor=west,tick label style={font=\scriptsize}, xmin=4,xmax=20,ymin=18.3,ymax=22.5,xtick={4,6,...,18},ytick={18,19,...,29}]
	\node at (axis description cs:0.5,0.9) {breast cancer};
\addplot coordinates {(4,18.367300) (5,18.367300) (6,18.367300) (7,18.367300) (8,18.367300) (9,20.408200) (10,18.367300) (11,18.367300) (12,18.367300) (13,18.367300) (14,20.408200) (15,22.449000) (16,20.408200) (17,20.408200) (18,20.408200) (19,20.408200) (20,20.408200) (21,22.449000) (22,20.408200) (23,24.489800) (24,20.408200) (25,24.489800) };
\addplot coordinates {(4,18.367300) (5,18.367300) (6,20.408200) (7,20.408200) (8,22.449000) (9,18.367300) (11,20.408200) (12,20.408200) (13,20.408200) (14,20.408200) (15,18.367300) (16,20.408200) (17,20.408200) (18,20.408200) (19,18.367300) (20,20.408200) (21,22.449000) (22,22.449000) (23,24.489800) (25,22.449000) };
	\end{axis}
	
	\begin{axis} [name=plot3,at={($(plot2.east)+(1cm,0)$)},anchor=west,tick label style={font=\scriptsize}, xmin=4,xmax=15,ymin=16,ymax=23,xtick={4,6,...,14},ytick={16,17,...,22}]
	\node at (axis description cs:0.5,0.9) {diabetes};
	\node[coordinate,pin={[pin distance=0.5cm,pin edge={<-}]165:{1} } ] at (axis cs:13.7,16.1) {};
	\node[coordinate,pin={[pin distance=0.5cm,pin edge={<-}]-90:{2} } ] at (axis cs:5,18.5) {};
\addplot coordinates {(4,19.333300) (5,22.000000) (6,19.333300) (7,18.000000) (8,18.666700) (9,18.000000) (10,18.000000) (11,17.333300) (12,20.000000) (13,19.333300) (14,17.333300) (15,18.666700) (16,18.666700) (17,21.333300) (18,18.666700) (19,21.333300) (20,17.333300) (21,18.000000) (22,18.000000) (23,16.666700) (24,16.000000) (25,16.000000) (26,16.000000) (27,23.333300) (28,16.000000) (29,23.333300) (30,24.000000) };
\addplot coordinates {(4,22.000000) (5,18.666700) (6,22.666700) (7,19.333300) (8,18.666700) (9,18.000000) (10,20.666700) (14,16.000000) (15,19.333300) (16,20.000000) (17,19.333300) (18,22.000000) (23,21.333300) (29,25.333300) };
	\end{axis}
	
	\begin{axis} [name=plot4,at={($(plot1.south)-(0,0.65cm)$)},anchor=north,tick label style={font=\scriptsize},ylabel=Generalization error ($\%$),y label style={font=\footnotesize}, xmin=1,xmax=13,ymin=20,ymax=42,xtick={1,3,...,12},ytick={20,22,...,41}]
	\node at (axis description cs:0.25,0.9) {flare solar};
	\node[coordinate,pin={[pin distance=0.5cm,pin edge={<-}]90:{*} } ] at (axis cs:3,21.5) {};
\addplot coordinates {(1,37.931000) (2,37.931000) (3,34.482800) (4,27.586200) (5,31.034500) (6,31.034500) (7,41.379300) (8,37.931000) (9,37.931000) (10,37.931000) (11,37.931000) (12,37.931000) (13,41.379300) (14,44.827600) };
\addplot coordinates {(1,37.931000) (3,20.689700) (4,20.689700) (5,31.034500) (7,27.586200) (9,27.586200) (10,27.586200) (11,27.586200) (13,27.586200) };
	\end{axis}
	
	\begin{axis} [name=plot5,at={($(plot2.south)-(0,0.65cm)$)},anchor=north,y label style={font=\footnotesize},tick label style={font=\scriptsize}, xmin=1,xmax=20,ymin=22.7,ymax=31.3,xtick={1,3,...,50},ytick={21,22,...,33}]
	\node at (axis description cs:0.3,0.9) {german};
	\node[coordinate,pin={[pin distance=0.5cm,pin edge={<-}]-80:{1} } ] at (axis cs:2.15,25.7) {};
\addplot coordinates {(1,26.424900) (2,26.424900) (3,26.424900) (4,26.424900) (5,24.352300) (6,23.834200) (7,24.352300) (8,23.316100) (9,25.388600) (10,25.906700) (11,28.497400) (12,30.569900) (13,25.388600) (14,24.870500) (15,25.906700) (16,23.316100) (17,23.316100) (18,22.797900) (19,25.388600) (20,26.424900) (21,24.870500) (22,22.797900) (23,21.761700) (24,21.761700) (25,27.979300) (26,27.461100) (27,21.761700) (28,22.279800) (29,26.424900) (30,27.461100) (31,22.279800) (32,23.316100) (33,26.943000) (34,21.761700) (35,26.943000) (36,26.943000) (37,22.279800) (38,23.316100) (39,27.979300) (40,31.606200) (41,29.015500) (42,29.533700) (43,23.316100) (44,23.834200) (45,24.352300) (46,30.051800) (47,24.870500) (48,24.870500) (49,27.979300) (50,29.533700) };
\addplot coordinates {(1,26.424900) (2,25.906700) (3,26.424900) (5,25.906700) (6,24.870500) (7,27.979300) (8,26.424900) (9,24.352300) (10,27.461100) (11,27.461100) (12,29.533700) (13,25.388600) (14,28.497400) (15,26.943000) (16,30.051800) (17,23.834200) (18,29.015500) (19,31.088100) (20,29.015500) (21,30.051800) (22,31.088100) (23,28.497400) (24,29.533700) (25,29.015500) (26,30.569900) (27,30.051800) (28,28.497400) (29,31.606200) (30,30.569900) (31,28.497400) (32,28.497400) (33,28.497400) (34,28.497400) (35,27.979300) (36,26.943000) (37,28.497400) (39,26.943000) (41,32.124400) (42,29.533700) (45,31.088100) };
	\end{axis}
	
	\begin{axis} [name=plot6,at={($(plot3.south)-(0,0.65cm)$)},anchor=north, tick label style={font=\scriptsize}, xmin=3,xmax=20,ymin=9.7,ymax=15.8,xtick={3,5,...,35},ytick={9,10,...,22}]
	\node at (axis description cs:0.7,0.9) {heart};
\addplot coordinates {(3,9.803900) (4,9.803900) (5,13.725500) (6,9.803900) (7,9.803900) (8,9.803900) (9,15.686300) (10,15.686300) (11,15.686300) (12,11.764700) (13,11.764700) (14,11.764700) (15,13.725500) (16,13.725500) (17,13.725500) (18,13.725500) (19,13.725500) (20,13.725500) (21,15.686300) (22,15.686300) (23,17.647100) (24,15.686300) (25,17.647100) (26,15.686300) (27,15.686300) (29,17.647100) (30,15.686300) (32,15.686300) (33,15.686300) (35,15.686300) };
\addplot coordinates {(3,9.803900) (4,9.803900) (6,13.725500) (7,15.686300) (8,11.764700) (9,15.686300) (10,11.764700) (11,15.686300) (12,13.725500) (13,11.764700) (14,11.764700) (26,15.686300) (27,15.686300) (28,19.607800) (29,15.686300) (30,19.607800) (31,15.686300) };
	\end{axis}
	
	\begin{axis} [name=plot7,at={($(plot4.south)-(0,0.65cm)$)},anchor=north,ylabel=Generalization error ($\%$),y label style={font=\footnotesize}, tick label style={font=\scriptsize}, xmin=8,xmax=49.9,ymin=2.5,ymax=9.9,xtick={10,15,...,80},ytick={2,3,...,31}]
	\node at (axis description cs:0.5,0.9) {image};
	\node[coordinate,pin={[pin distance=0.1cm,pin edge={<-}]180:{1} } ] at (axis cs:27.3,2.75) {};
	\node[coordinate,pin={[pin distance=0.1cm,pin edge={<-}]-3:{2} } ] at (axis cs:29.6,3.2) {};
	\node[coordinate,pin={[pin distance=0.2cm,pin edge={<-}]180:{3} } ] at (axis cs:25.4,3.5) {};
	\node[coordinate,pin={[pin distance=0.1cm,pin edge={<-}]240:{4} } ] at (axis cs:14.8,3.65) {};
	\node[coordinate,pin={[pin distance=0.4cm,pin edge={<-}]270:{5} } ] at (axis cs:17,4.4) {};
\addplot coordinates {(4,13.250000) (5,13.250000) (6,13.000000) (7,11.000000) (8,10.250000) (9,9.750000) (10,9.500000) (11,7.750000) (12,7.250000) (13,8.000000) (14,7.000000) (15,6.000000) (16,5.750000) (17,5.750000) (18,5.000000) (19,4.000000) (20,4.750000) (21,4.750000) (22,5.000000) (23,4.500000) (24,4.250000) (25,4.250000) (26,4.250000) (27,4.750000) (28,4.250000) (29,4.250000) (31,4.500000) (32,4.500000) (33,4.250000) (34,4.250000) (35,3.250000) (36,3.000000) (37,3.250000) (42,8.500000) (43,6.750000) (44,7.250000) (47,5.750000) (49,5.250000) (53,4.750000) (54,3.750000) (55,3.500000) (58,3.500000) (62,3.500000) (63,3.500000) (71,3.000000) (73,2.750000) (75,2.250000) (79,2.500000) (80,2.500000) };
\addplot coordinates {(6,15.500000) (8,9.500000) (9,6.500000) (10,4.750000) (11,5.500000) (12,4.750000) (13,5.750000) (14,6.500000) (15,3.750000) (16,4.750000) (17,4.500000) (20,5.000000) (21,4.000000) (22,4.000000) (23,4.000000) (24,4.000000) (25,3.750000) (26,3.500000) (27,4.000000) (28,2.750000) (29,3.250000) (30,2.750000) (31,3.750000) (32,3.500000) (33,3.500000) (34,3.500000) (35,3.750000) (36,5.000000) (37,4.750000) (38,4.000000) (39,4.500000) (42,5.250000) (44,4.750000) (45,5.000000) (47,4.250000) (48,5.000000) };
	\end{axis}
	
	\begin{axis} [name=plot8,at={($(plot5.south)-(0,0.65cm)$)},anchor=north,y label style={font=\footnotesize}, tick label style={font=\scriptsize}, xmin=18,xmax=65,ymin=2.75,ymax=10.9,xtick={5,10,...,60},ytick={2,3,...,40}]
	\node at (axis description cs:0.5,0.9) {ringnorm};
	\node[coordinate,pin={[pin distance=0.6cm,pin edge={<-}]180:{1} } ] at (axis cs:50.4,3.25) {};
	\node[coordinate,pin={[pin distance=0.08cm,pin edge={<-}]-5:{2} } ] at (axis cs:60.3,3.55) {};
	\node[coordinate,pin={[pin distance=0.6cm,pin edge={<-}]178:{3} } ] at (axis cs:45,3.67) {};
	\node[coordinate,pin={[pin distance=0.1cm,pin edge={<-}]130:{4} } ] at (axis cs:54.5,3.82) {};
	\node[coordinate,pin={[pin distance=0.1cm,pin edge={<-}]-100:{5} } ] at (axis cs:57.7,3.8) {};
\addplot coordinates {(5,29.503700) (6,23.997300) (7,22.025800) (8,20.666200) (9,20.326300) (10,19.578500) (11,16.927300) (12,15.975500) (13,15.091800) (14,14.955800) (15,14.004100) (16,12.440500) (17,10.673000) (18,10.673000) (19,10.537000) (20,9.177400) (21,8.905500) (22,8.633600) (23,8.225700) (24,8.905500) (25,8.769500) (26,7.885800) (27,8.021800) (28,7.953800) (29,7.613900) (30,7.002000) (31,7.477900) (32,7.274000) (33,7.138000) (34,7.002000) (35,7.206000) (36,7.070000) (37,6.526200) (38,6.798100) (39,6.526200) (40,6.050300) (41,5.914300) (42,5.982300) (43,5.778400) (44,5.642400) (45,5.370500) (46,5.030600) (47,5.030600) (48,5.030600) (49,5.030600) (50,5.098600) (51,5.166600) (52,4.962600) (53,4.826600) (54,4.826600) (55,4.690700) (56,4.622700) (57,4.486700) (58,4.418800) (59,4.214800) (60,4.010900) (61,3.806900) (62,4.146800) (63,4.078900) (64,4.078900) (65,3.942900) (66,3.874900) (67,3.942900) (68,3.806900) (69,3.739000) (70,3.739000) (71,3.671000) (72,9.857200) (73,3.535000) (74,3.467000) (75,3.195100) (76,3.467000) (77,3.331100) (78,3.467000) (79,3.331100) (80,3.195100) (81,3.127100) (82,2.923200) (83,3.127100) (85,3.127100) };
\addplot coordinates {(9,19.102700) (11,16.859300) (12,15.635600) (13,13.800100) (14,12.712400) (15,11.488800) (16,11.352800) (17,11.148900) (18,9.313400) (19,8.769500) (20,8.769500) (21,8.157700) (22,7.545900) (23,7.341900) (24,7.681800) (25,7.817800) (26,6.730100) (27,6.526200) (28,5.914300) (29,5.506500) (30,5.982300) (31,5.234500) (32,5.506500) (33,5.642400) (34,5.166600) (35,4.690700) (36,4.962600) (37,5.778400) (38,4.962600) (39,5.030600) (40,4.622700) (41,5.030600) (42,5.030600) (43,5.098600) (44,4.554700) (45,4.282800) (46,3.671000) (48,4.214800) (49,3.942900) (51,3.263100) (52,3.671000) (53,3.671000) (54,3.671000) (55,3.739000) (56,4.146800) (57,4.214800) (58,3.874900) (59,3.874900) (60,3.603000) (61,4.486700) (63,4.282800) };
	\end{axis}
	
		\begin{axis} [name=plot9,at={($(plot6.south)-(0,0.65cm)$)},anchor=north,x label style={font=\footnotesize}, y label style={font=\footnotesize}, tick label style={font=\scriptsize}, xmin=12,xmax=45,ymin=5.4,ymax=7.9,xtick={10,14,...,65},ytick={5,6,...,12}]
	\node at (axis description cs:0.5,0.9) {splice};
	\node[coordinate,pin={[pin distance=0.4cm,pin edge={<-}]10:{1} } ] at (axis cs:13.5,6.74) {};
\addplot coordinates {(6,10.180600) (7,9.688000) (8,10.344800) (9,9.195400) (10,6.896600) (11,8.374400) (12,7.389200) (13,7.881800) (14,6.896600) (15,5.747100) (16,5.911300) (17,6.568100) (18,6.239700) (19,6.239700) (20,5.911300) (21,5.911300) (22,5.582900) (23,6.075500) (24,6.403900) (25,6.568100) (26,6.239700) (27,6.403900) (28,6.075500) (29,7.060800) (30,6.239700) (31,7.225000) (32,6.075500) (33,6.896600) (34,5.911300) (35,7.060800) (36,6.403900) (37,5.582900) (38,6.075500) (39,6.239700) (40,6.568100) (41,5.747100) (42,5.418700) (43,7.553400) (44,7.225000) (45,7.553400) (46,7.225000) (47,7.060800) (48,7.060800) (49,7.060800) (50,5.418700) (51,7.225000) (52,7.717600) (53,7.717600) (54,7.881800) (55,7.717600) (56,8.046000) (57,7.717600) (58,8.210200) (59,5.418700) (60,7.389200) (61,7.225000) (62,5.418700) (63,7.060800) (64,7.225000) (65,5.582900) };
\addplot coordinates {(6,11.494300) (7,11.165800) (8,10.344800) (9,9.359600) (10,7.553400) (11,8.374400) (12,7.389200) (13,6.732300) (14,6.239700) (15,6.239700) (16,6.239700) (17,6.568100) (18,6.239700) (19,6.239700) (20,5.911300) (21,6.403900) (22,6.568100) (23,6.239700) (24,6.239700) (25,6.403900) (26,5.582900) (27,6.075500) (28,6.403900) (29,5.911300) (30,6.403900) (31,5.911300) (32,6.239700) (34,7.225000) (35,6.239700) (37,6.239700) (38,6.239700) (40,6.732300) (42,6.568100) (44,7.717600) (45,6.239700) (48,7.060800) (49,7.225000) (50,6.732300) (51,7.225000) (53,6.896600) (54,6.896600) (55,7.389200) (56,7.060800) };
	\end{axis}
	
		\begin{axis} [name=plot10,at={($(plot7.south)-(0,0.65cm)$)},anchor=north,x label style={font=\footnotesize},ylabel=Generalization error ($\%$),y label style={font=\footnotesize}, tick label style={font=\scriptsize}, xlabel=Cardinality, xmin=3,xmax=14,ymin=0,ymax=10,xtick={1,3,...,13},ytick={0,1,...,9}]
	\node at (axis description cs:0.5,0.9) {thyroid};
\addplot coordinates {(1,19.512200) (2,19.512200) (3,7.317100) (4,9.756100) (5,2.439000) (6,2.439000) (7,2.439000) (8,7.317100) (9,7.317100) (10,4.878000) (11,2.439000) (12,2.439000) (13,0.000000) (14,0.000000) (15,0.000000) (16,0.000000) (17,0.000000) (18,0.000000) (19,0.000000) (20,0.000000) (21,2.439000) };
\addplot coordinates {(2,19.512200) (5,2.439000) (6,4.878000) (7,2.439000) (8,4.878000) (9,2.439000) (10,4.878000) };
	\end{axis}
	
	\begin{axis} [name=plot11,at={($(plot8.south)-(0,0.65cm)$)},anchor=north,x label style={font=\footnotesize},y label style={font=\footnotesize}, tick label style={font=\scriptsize}, xlabel=Cardinality, xmin=21,xmax=70,ymin=3,ymax=5.6,xtick={5,11,...,75},ytick={2,3,...,25}]
	\node at (axis description cs:0.5,0.9) {twonorm};
	\node[coordinate,pin={[pin distance=0.4cm,pin edge={<-}]180:{1} } ] at (axis cs:33.2,3.67) {};
	\node[coordinate,pin={[pin distance=0.5cm,pin edge={<-}]-100:{2} } ] at (axis cs:35.8,3.77) {};
	\node[coordinate,pin={[pin distance=0.5cm,pin edge={<-}]-80:{3} } ] at (axis cs:39.1,3.84) {};
	\node[coordinate,pin={[pin distance=0.5cm,pin edge={<-}]-90:{4} } ] at (axis cs:38,3.91) {};
	\node[coordinate,pin={[pin distance=0.5cm,pin edge={<-}]180:{5} } ] at (axis cs:34.3,4.01) {};
	\node[coordinate,pin={[pin distance=0.1cm,pin edge={<-}]180:{*} } ] at (axis cs:51.2,3.13) {};
\addplot coordinates {(5,21.074100) (6,21.210100) (7,15.839600) (8,15.907500) (9,13.256300) (10,13.664200) (11,11.420800) (12,11.624700) (13,9.517300) (14,10.197100) (15,8.021800) (16,8.361700) (17,6.594200) (18,6.390200) (19,5.234500) (20,5.370500) (21,5.302500) (22,5.302500) (23,4.962600) (24,5.098600) (25,4.418800) (26,4.214800) (27,4.350800) (28,4.146800) (29,4.350800) (30,4.418800) (31,4.486700) (32,4.282800) (33,4.418800) (34,4.282800) (35,4.350800) (36,4.078900) (37,4.078900) (38,4.282800) (39,4.146800) (40,3.874900) (41,4.078900) (42,4.146800) (43,3.942900) (44,3.535000) (45,3.467000) (46,3.671000) (47,3.739000) (48,3.806900) (49,3.806900) (50,3.806900) (51,3.535000) (52,3.671000) (53,3.874900) (54,3.739000) (55,3.874900) (56,3.806900) (57,3.739000) (58,3.806900) (59,3.806900) (60,3.806900) (62,3.739000) (63,3.874900) (64,4.010900) (65,3.739000) (66,3.874900) (67,4.078900) (68,4.078900) (69,3.874900) (70,3.603000) (71,3.739000) (72,3.874900) (73,3.942900) (74,3.874900) (75,3.942900) };
\addplot coordinates {(5,21.074100) (6,20.598200) (7,15.839600) (8,15.907500) (9,13.256300) (10,15.227700) (11,10.469100) (12,11.216900) (13,9.517300) (14,10.197100) (15,9.517300) (16,8.361700) (17,6.594200) (18,6.186300) (19,5.234500) (20,6.118300) (21,5.302500) (22,5.370500) (23,5.506500) (24,5.506500) (25,4.826600) (26,4.758700) (27,4.350800) (28,4.418800) (29,4.690700) (30,4.282800) (31,4.214800) (32,4.622700) (33,4.350800) (34,3.671000) (35,4.010900) (36,3.806900) (37,4.010900) (38,3.942900) (39,3.874900) (40,3.874900) (41,4.078900) (43,3.806900) (46,3.942900) (47,3.535000) (48,3.671000) (49,3.535000) (50,3.535000) (51,3.739000) (52,3.127100) (53,3.739000) (54,4.010900) (55,3.331100) (56,3.603000) (57,3.467000) (60,3.331100) (61,3.603000) (62,3.331100) (63,3.671000) (64,3.874900) (65,3.603000) (66,3.942900) (67,4.282800) (68,3.127100) (69,3.739000) (70,3.467000) (71,3.603000) (73,3.806900) };
	\end{axis}
	
		\begin{axis} [name=plot12,at={($(plot9.south)-(0,0.65cm)$)},anchor=north,x label style={font=\footnotesize},y label style={font=\footnotesize}, tick label style={font=\scriptsize}, xlabel=Cardinality, xmin=13,xmax=60,ymin=10.2,ymax=14.5,xtick={10,15,...,59},ytick={10,11,...,19}]
	\node at (axis description cs:0.5,0.9) {waveform};
	\node[coordinate,pin={[pin distance=0.1cm,pin edge={<-}]190:{1} } ] at (axis cs:37.4,10.65) {};
	\node[coordinate,pin={[pin distance=0.5cm,pin edge={<-}]184:{2} } ] at (axis cs:36.2,10.75) {};
	\node[coordinate,pin={[pin distance=0.8cm,pin edge={<-}]184:{3} } ] at (axis cs:34.2,10.96) {};
	\node[coordinate,pin={[pin distance=0.9cm,pin edge={<-}]184:{4} } ] at (axis cs:31.2,11.28) {};
	\node[coordinate,pin={[pin distance=0.6cm,pin edge={<-}]180:{5} } ] at (axis cs:32.2,11.38) {};
	\node[coordinate,pin={[pin distance=0.4cm,pin edge={<-}]0:{*} } ] at (axis cs:43.05,10.47) {};
\addplot coordinates {(6,16.213500) (7,16.314200) (8,15.105700) (9,15.105700) (10,14.803600) (11,14.098700) (12,14.199400) (13,14.400800) (14,13.494500) (15,13.998000) (16,14.300100) (17,13.897300) (18,13.293100) (19,13.393800) (20,13.695900) (21,13.998000) (22,13.192300) (23,13.393800) (24,13.091600) (25,12.890200) (26,12.588100) (27,12.688800) (28,12.286000) (29,12.286000) (30,12.185300) (31,11.581100) (32,11.480400) (33,11.983900) (34,11.379700) (35,11.782500) (36,11.782500) (37,11.983900) (38,12.084600) (39,11.782500) (40,11.681800) (41,11.178200) (42,11.279000) (43,11.379700) (44,10.976800) (45,10.976800) (46,11.480400) (47,11.581100) (48,11.178200) (49,11.480400) (50,10.876100) (51,10.976800) (52,10.976800) (53,10.876100) (54,10.574000) (55,11.480400) (56,11.983900) (57,11.883200) (58,11.480400) (59,11.782500) (60,11.480400) (61,11.883200) (62,11.883200) (63,11.782500) (64,11.883200) (65,11.581100) (66,11.782500) (67,12.084600) (68,11.782500) (69,11.681800) (70,11.782500) (71,11.581100) (72,11.279000) (73,11.681800) (74,11.581100) (75,11.983900) (76,12.084600) (77,12.084600) (78,12.286000) (79,12.185300) (80,12.084600) };
\addplot coordinates {(6,16.918400) (7,15.710000) (8,15.508600) (9,13.998000) (10,14.602200) (11,13.494500) (12,15.005000) (13,13.796600) (14,13.998000) (15,13.293100) (16,13.393800) (18,12.688800) (19,12.990900) (20,12.286000) (21,12.588100) (22,12.588100) (23,12.286000) (24,12.688800) (25,12.789500) (26,12.084600) (27,11.581100) (28,11.581100) (29,11.883200) (30,12.084600) (31,11.983900) (32,11.279000) (33,11.379700) (34,11.581100) (35,10.976800) (36,11.379700) (37,10.775400) (38,10.674700) (39,11.581100) (40,11.681800) (41,11.883200) (42,10.473300) (45,11.581100) (47,11.480400) (48,11.883200) (50,11.379700) (52,11.883200) (54,11.379700) (55,12.386700) (56,12.990900) (58,12.185300) (61,12.386700) (62,12.386700) (63,13.091600) (64,12.990900) (65,12.688800) (66,12.084600) (69,12.286000) (71,12.286000) (72,12.588100) (73,12.487400) (75,12.487400) (76,12.789500) (77,13.293100) (78,12.185300) };
	\end{axis}
\end{tikzpicture}
 \caption{Generalization results on twelve public data sets commonly used for benchmarking of boosting algorithms. For visualization clarity the axes limits are adjusted to discard uninteresting regions of severe under- or overfitting. Additionally, the Pareto frontiers of L1CG/UCG- and CP-generated points are drawn to outline the model selection that can be done at different cardinalities. We also tag specific points on the CP Pareto frontier whose advantages over L1CG/UCG points are quantified in Table~\ref{table:sparsity_gains}.}
\label{fig:error_vs_card}
\end{figure*}

Table~\ref{table:sparsity_gains} shows a sampling of the most notable sparsity and generalization gains provided by CP. We define \emph{sparsity gain} as the improvement in sparsity given by a CP point on the Pareto frontier relative to the sparsest L1CG/UCG point with comparable generalization performance. The results show various sparsity gains between $2.5\%$ and $67.65\%$. Whenever there is no L1CG/UCG point to which to relate for sparsity gain, i.e. all L1CG/UCG points have worse generalization performance, we quantify \emph{generalization gain} as the improvement in generalization given by a CP point relative to the best L1CG/UCG point across all seen cardinalities. The results show various generalization gains between $0.95\%$ and $25\%$.

To compare the optimization success of experiment {\bf E.} against {\bf B.}, we counted the number of times {\bf E.} is worse than {\bf B.} at coinciding cardinalities and divided by the total number of times {\bf E.} and {\bf B} had coinciding cardinalities. This measurement yielded only $1.61\%$ on empirical risk and $2.05\%$ on training error, which confirms the view of early stopping as simply unoptimized CP. We did the same comparison for experiments {\bf D.} and {\bf E}. On empirical risk, {\bf D.} lost from {\bf E.} $23.81\%$ of the time and on training error $13.01\%$ of the time. We attribute these numbers mainly to our inability at present to fully and reliably optimize the generated CP problems.

\begin{table*}[t!]
\caption{Values in each entry are: generalization error of CP-generated point lying on Pareto frontier (\%), its cardinality, and its sparsity gain ({\bf \%}) relative to the sparsest L1CG/UCG point of comparable generalization. The rows are denoted by IDs that correspond to tags in Fig.~\ref{fig:error_vs_card}. On the data sets that have entries for *, CP also achieves generalization unattainable by the L1CG/UCG methods at any cardinalities up to 100. For such CP points we cannot compute well-defined sparsity gains, but we point out their generalization improvements relative to the best L1CG/UCG points across all seen cardinalities: {\bf banana 3.52\%, twonorm 9.8\%, waveform 0.95\%, flare solar 25\%}. For these and other points it is possible also to compute relative generalization gains on comparable cardinalities, but we leave that to visual inspection of Fig.~\ref{fig:error_vs_card}. breast cancer, heart, and thyroid do not have any CP points with sparsity gains. diabetes, german, and splice each have only one or two CP points with sparsity gains. The rest of the data sets have numerous CP points with sparsity gains, out of which we pick the top five by generalization error to list here.}
\centering
\small
\begin{tabular}{cccccc}
\toprule
ID & banana & image & ringnorm & twonorm & waveform\\
\midrule
\rowcolor[gray]{0.9}
* & 25.85, 33, {\bf N/A} & N/A & N/A & 3.10, 52, {\bf N/A} & 10.47, 42, {\bf N/A}\\
1 & 26.79, 31, {\bf 18.42} & 2.75, 28, {\bf 61.64} & 3.26, 51, {\bf 31.08} & 3.67, 34, {\bf 20.93} & 10.67, 38, {\bf 28.30}\\
\rowcolor[gray]{0.9}
2 & 26.89, 12, {\bf 67.57} & 3.25, 29, {\bf 17.14} & 3.60, 60, {\bf 16.67} & 3.81, 36, {\bf 16.28} & 10.78, 37, {\bf 30.19}\\
3 & 27.17, 21, {\bf 40.00} & 3.50, 26, {\bf 23.53} & 3.67, 46, {\bf 35.21} & 3.87, 39, {\bf 2.50}   & 10.98, 35, {\bf 20.45}\\
\rowcolor[gray]{0.9}
4 & 27.26, 11, {\bf 67.65} & 3.75, 15, {\bf 55.88} & 3.74, 55, {\bf 20.29} & 3.94, 38, {\bf 2.56}   & 11.28, 32, {\bf 20.00}\\
5 & 27.36, 23, {\bf 32.35} & 4.50, 17, {\bf 5.56}   & 3.87, 58, {\bf 3.33} &    4.01, 35, {\bf 10.26} & 11.38, 33, {\bf 2.94}\\
\toprule
ID & diabetes & german & splice & flare solar\\
\midrule
\rowcolor[gray]{0.9}
* & N/A & N/A & N/A & 20.95, 3, {\bf N/A}\\
1 & 16.00, 14, {\bf 41.67} & 25.91, 2, {\bf 60.00} & 6.73, 13, {\bf 7.14} & N/A\\
\rowcolor[gray]{0.9}
2 & 18.67, 5, {\bf 16.67} & N/A & N/A & N/A\\
\rowcolor[gray]{0.9}
\bottomrule
\end{tabular}\label{table:sparsity_gains}
\end{table*}

\subsection{Results on Project Glass}\label{glass}
Lastly, we saw a significant impact of CP also on boosted cascades for the eye gesture detector used in Google's Project Glass. The wearable device imposes severe memory, energy, and processing restrictions, so the trained detector is required to be maximally compact at any desired detection accuracy. In comparisons with corrective and totally corrective versions of convex boosting with early stopping, we observed that CP optimizations on cascades of cardinality ${\sim}20$ using training sets of ${\sim}10^5$ examples and ${\sim}10^4$ features can reduce false detections by $15$-$45\%$ at fixed recall.

\section{Conclusion}\label{conclusion}
Historically, in comparisons between AdaBoost and explicitly regularized boosting algorithms, e.g. \citet{Duchi09}, what we call here UCG with early stopping has been known to yield better results than L1CG. While it has been somewhat of a mystery why a much simplified algorithm such as AdaBoost is performing well, in the context of cardinality-penalized boosting we concluded that early stopping is nothing but suboptimal cardinality regularization. Our experiments with explicitly cardinality-penalized CG indicate that there is room for significant improvement as we attempt to solve the combinatorial problems arising from CP. Because in this work we used only a classical heuristic algorithm in place of quantum hardware, it remains to be seen how much better results can be obtained when the truly intended optimization engine for this learning algorithm becomes widely available.

\section*{Acknowledgements}
We thank Manfred Warmuth, Corinna Cortes, Christian Szegedy, James Philbin, Bo Wu, and Hayes Raffle for useful discussions and comments on early drafts of the paper.

\bibliography{tqboost}

\appendix
\section{Primal CP and the Lagrange Dual: Indirect Argument}\label{indirect}
We can show that the Lagrange dual function is unaffected by the cardinality term in the primal without introducing the dummy primal variables $\vc{\eta}$ used in Subsection~\ref{card_penal}:
\begin{equation}
\min_{\vc{w},\vc{\gamma}} \sum_{i=1}^m l(\gamma_i) + \nu\vc{1}^\top\vc{w} + \lambda \bm{\hbox{card}}(\vc{w}) \label{primal_with_l0_no_dummy}
\enspace \mbox{ s.t. } \gamma_i = y_i \mathcal{H}_{i:} \vc{w} \mbox{ (for } i = 1,\ldots,m\mbox{)}, \vc{w} \succeq \vc{0}
\end{equation}

Let the Lagrangian without the term $\lambda \bm{\hbox{card}}(\vc{w})$ be denoted by $L(\vc{w}, \vc{\gamma}, \vc{u}, \vc{p})$, so that the Lagrangian of \eqref{primal_with_l0_no_dummy} is $L(\vc{w}, \vc{\gamma}, \vc{u}, \vc{p}) + \lambda \bm{\hbox{card}}(\vc{w})$. Then the dual functions of primal without $\lambda \bm{\hbox{card}}(\vc{w})$ and with it are denoted respectively by
\begin{equation}
F(\vc{u},\vc{p}) = \inf_{\vc{w},\vc{\gamma}} L(\vc{w}, \vc{\gamma}, \vc{u}, \vc{p})
\end{equation} and
\begin{equation}
F_{\lambda}(\vc{u},\vc{p}) = \inf_{\vc{w},\vc{\gamma}} L(\vc{w}, \vc{\gamma}, \vc{u}, \vc{p}) + \lambda \bm{\hbox{card}}(\vc{w}).
\end{equation}
A straightforward observation is that $L(\vc{w}, \vc{\gamma}, \vc{u}, \vc{p}) \leq L(\vc{w}, \vc{\gamma}, \vc{u}, \vc{p}) + \lambda \bm{\hbox{card}}(\vc{w})$ with equality holding only for $\vc{w} = \vc{0}$. We use this in the following:
\begin{theorem}\label{shin_argument}
For any $(\vc{u}, \vc{p})$ at which $F$ and $F_{\lambda}$ are finite, $F = F_{\lambda}$.
\end{theorem}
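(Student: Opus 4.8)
The plan is to prove this by exploiting the fact that the Lagrangian of \eqref{primal_with_l0_no_dummy} is affine in $\vc{w}$, together with the two observations already recorded before the statement: $L(\vc{w},\vc{\gamma},\vc{u},\vc{p}) \le L(\vc{w},\vc{\gamma},\vc{u},\vc{p}) + \lambda\bm{\hbox{card}}(\vc{w})$ pointwise (since $\bm{\hbox{card}}(\cdot)\ge 0$), with equality exactly at $\vc{w}=\vc{0}$. The easy direction is immediate: taking the infimum over $(\vc{w},\vc{\gamma})$ in this pointwise inequality gives $F(\vc{u},\vc{p}) \le F_{\lambda}(\vc{u},\vc{p})$ at every $(\vc{u},\vc{p})$, so only $F_{\lambda} \le F$ remains.

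For the reverse direction I would write the Lagrangian explicitly, with multipliers $\vc{u}$ for the margin equalities $\gamma_i = y_i\mathcal{H}_{i:}\vc{w}$ and $\vc{p}\succeq\vc{0}$ for $\vc{w}\succeq\vc{0}$, obtaining $L(\vc{w},\vc{\gamma},\vc{u},\vc{p}) = \vc{c}^\top\vc{w} + r(\vc{\gamma},\vc{u})$, where $\vc{c} = \nu\vc{1} - \mathcal{H}^\top\mbox{diag}(\vc{y})\vc{u} - \vc{p}$ is precisely the residual of the dual constraint \eqref{constraints} and $r(\vc{\gamma},\vc{u}) = \sum_{i} l(\gamma_i) + \vc{u}^\top\vc{\gamma}$ involves no $\vc{w}$. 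Since $\vc{w}$ ranges over all of $\mathbb{R}^n$ in the definition of the dual function, and the $\vc{w}$- and $\vc{\gamma}$-parts separate, $F(\vc{u},\vc{p}) = \inf_{\vc{w}}\vc{c}^\top\vc{w} + \inf_{\vc{\gamma}} r(\vc{\gamma},\vc{u})$, in which the first term equals $0$ if $\vc{c}=\vc{0}$ and $-\infty$ otherwise. Hence wherever $F$ is finite we must have $\vc{c}=\vc{0}$, so $L(\vc{w},\vc{\gamma},\vc{u},\vc{p})=r(\vc{\gamma},\vc{u})$ does not depend on $\vc{w}$ at all. Evaluating $F_{\lambda}$ by restricting to $\vc{w}=\vc{0}$, where $\bm{\hbox{card}}(\vc{0})=0$, then yields $F_{\lambda}(\vc{u},\vc{p}) \le \inf_{\vc{\gamma}} r(\vc{\gamma},\vc{u}) = F(\vc{u},\vc{p})$, and combining with the easy direction gives $F=F_{\lambda}$. (The stated hypothesis that $F_{\lambda}$ is also finite is in fact superfluous: if $\vc{c}\neq\vc{0}$ one can drive $\vc{c}^\top\vc{w}\to-\infty$ along a direction of full support, on which $\bm{\hbox{card}}(\vc{w})$ stays constant, forcing $F_{\lambda}=-\infty$; so $F_{\lambda}$ finite likewise pins $\vc{c}=\vc{0}$ and the same computation closes the argument.)

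There is no genuinely hard step: the only real content is the observation that finiteness of the dual function forces the linear coefficient $\vc{c}$ of $\vc{w}$ to vanish, which is exactly the mechanism by which the cardinality term---nonnegative and zero at the origin---never reaches the dual. The one place requiring care is matching the sign conventions in the Lagrangian so that ``$\vc{c}=\vc{0}$'' literally coincides with the dual feasibility condition \eqref{constraints}; this is bookkeeping rather than an obstacle.
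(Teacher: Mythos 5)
Your proposal is correct and follows essentially the same route as the paper's proof: both exploit that the Lagrangian is linear in $\vc{w}$ with coefficient equal to the dual-constraint residual, so finiteness of the dual function forces that coefficient to vanish, after which $\vc{w}=\vc{0}$ attains the infimum and $\lambda\bm{\mbox{card}}(\vc{w})$ contributes nothing. Your closing remark (driving $\vc{c}^\top\vc{w}\to-\infty$ along a full-support direction so that $F_\lambda=-\infty$ as well) is a nice explicit justification of the step the paper asserts without detail, but it is the same argument, not a different one.
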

\begin{proof}
The Lagrangian without $\lambda \bm{\hbox{card}}(\vc{w})$ is linear in $\vc{w}$:
\begin{equation*}
L(\vc{w}, \vc{\gamma}, \vc{u}, \vc{p})
=\underbrace{\left(\nu \vc{1}^\top - \vc{u}^\top \mbox{diag}\left(\vc{y}\right)\mathcal{H} - \vc{p}^\top\right)}_{\vc{r}^\top}\vc{w} + \vc{u}^\top\vc{\gamma} + \sum_{i=1}^m l\left(\gamma_i\right) \enspace \mbox{, s.t. } \vc{p} \succeq \vc{0} \enspace. 
\end{equation*}
For $\vc{r}^\top \neq 0$, we have $F(\vc{u},\vc{p}) = -\infty$ and $F_{\lambda}(\vc{u},\vc{p}) = -\infty$. For $\vc{r}^\top = \vc{0}$, we see that $\vc{w} = \vc{0}$ minimizes both $L(\vc{w}, \vc{\gamma}, \vc{u}, \vc{p})$ and $L(\vc{w}, \vc{\gamma}, \vc{u}, \vc{p}) + \lambda \bm{\hbox{card}}(\vc{w})$:
\begin{equation*}
F_{\lambda}(\vc{u},\vc{p})= \inf_{\vc{w},\vc{\gamma}} \vc{0}^\top\vc{w} + \vc{u}^\top\vc{\gamma} + \sum_{i=1}^m l\left(\gamma_i\right) + \lambda \bm{\hbox{card}}(\vc{w})\\=  \inf_{\vc{\gamma}} \vc{u}^\top\vc{\gamma} + \sum_{i=1}^m l\left(\gamma_i\right)= F(\vc{u},\vc{p}) \enspace .
\end{equation*}
\end{proof}

\section{Primal CP and the Lagrange Dual: Intuitive Argument}\label{intuitive}
Intuitively, the cardinality term could potentially affect the dual function only through its gradient. However, because $\bm{\hbox{card}}(\vc{w})$ contains no gradient information, no influence from it can appear in the dual. This is properly understood by studying CP within the realm of distribution theory and generalized functions \citep{Lighthill} as the limit of a sequence:
\begin{equation}
\bm{\hbox{card}}(\vc{w}) = \lim_{q \rightarrow \infty} \Omega_{q,k}(\vc{w}) \enspace ,
\end{equation}
where, for any $k > 1$, $\displaystyle \Omega_{q,k}(\vc{w}) = \sum_{i=1}^n \left(\abs{w_i} + q^{-1}\right)^{q^{-k}}$.

The functions $\Omega_{q,k}(\vc{w})$ for finite $q$ are differentiable everywhere. Moreover, with $k > 1$, differentiability as understood within the theory of generalized functions is maintained for $q \rightarrow \infty$ as well. According to the definition of CP, we naturally expect $\frac{\partial \bm{card}(\vc{w})}{\partial w_i} = 0$ for $w_i \neq 0$. At first glance, points with $w_i = 0$ are problematic from the perspective of differentiability. However, in the limit of $q \rightarrow \infty$, $\frac{\partial \Omega_{q,k}(\vc{w})}{\partial w_i}\big|_{w_i = 0}$ is still zero for all defined $k$:
\begin{theorem}\label{thm:l0_gradient}
The gradient of $\bm{\hbox{card}}(\vc{w})$ is $\enspace \displaystyle \frac{d\bm{\hbox{card}}(\vc{w})}{d\vc{w}} = \lim_{q \rightarrow \infty} \frac{d \Omega_{q,k}(\vc{w})}{d \vc{w}} = \vc{0} \enspace$.
\end{theorem}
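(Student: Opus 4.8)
The plan is to reduce the statement to a direct evaluation of $\lim_{q\to\infty}\nabla\Omega_{q,k}(\vc{w})$ and then invoke the convergence $\bm{\hbox{card}}(\vc{w})=\lim_{q\to\infty}\Omega_{q,k}(\vc{w})$ already recorded above in order to interpret this limit as $d\,\bm{\hbox{card}}/d\vc{w}$ in the generalized-function sense. Throughout I would work on the feasible cone $\vc{w}\succeq\vc{0}$ of problem~\eqref{primal_with_l0}; there $\abs{w_i}=w_i$, so each summand $t\mapsto(t+q^{-1})^{q^{-k}}$ of $\Omega_{q,k}$ is genuinely smooth in $w_i\ge 0$ and there is no absolute-value kink to handle except possibly at the boundary point $w_i=0$, which I would treat with the inward one-sided derivative.

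First I would differentiate: for finite $q$ and $w_i\ge 0$, $\frac{\partial \Omega_{q,k}(\vc{w})}{\partial w_i}=q^{-k}\,(w_i+q^{-1})^{q^{-k}-1}$. Then I would split into two regimes. If $w_i\neq 0$: as $q\to\infty$ we have $q^{-k}\to 0$ while $(w_i+q^{-1})^{q^{-k}-1}\to w_i^{-1}$, a finite number, so the product tends to $0$, reproducing the expected $\partial\,\bm{\hbox{card}}/\partial w_i=0$ away from the coordinate hyperplanes. If $w_i=0$: the derivative equals $q^{-k}(q^{-1})^{q^{-k}-1}=q^{\,1-k-q^{-k}}$, and since $k>1$ the exponent obeys $1-k-q^{-k}\to 1-k<0$, hence this too tends to $0$. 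Collecting the two cases gives $\lim_{q\to\infty}\nabla\Omega_{q,k}(\vc{w})=\vc{0}$ at every $\vc{w}\succeq\vc{0}$.

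It then remains to say in what sense this pointwise limit is the gradient of $\bm{\hbox{card}}$. The cleanest route is distributional: $\Omega_{q,k}$ is bounded on compacts uniformly in $q$ and converges pointwise to $\bm{\hbox{card}}$, so by dominated convergence $\Omega_{q,k}\to\bm{\hbox{card}}$ in $\mathcal{D}'$; since the distributional derivative is continuous on $\mathcal{D}'$, $d\,\bm{\hbox{card}}/d\vc{w}=\lim_{q}d\Omega_{q,k}/d\vc{w}$, and we have just evaluated the right-hand side to be $\vc{0}$. Equivalently, one can check directly that $\bm{\hbox{card}}$ pairs to zero against the derivative of any test function by integrating by parts across each jump.

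I expect the real obstacle to be this last identification rather than the calculus: $\bm{\hbox{card}}$ is badly discontinuous, so ``$d\,\bm{\hbox{card}}/d\vc{w}$'' has no classical meaning, and one must verify that $\Omega_{q,k}$ approaches it in a topology strong enough for differentiation to commute with the limit --- the naive ``differentiate under the limit'' lemma does not apply, since its hypotheses would force the limit to be $C^1$. A secondary care point is the boundary $w_i=0$: for finite $q$ the approximant $\Omega_{q,k}$ still has a nonzero one-sided derivative there, and it is exactly the hypothesis $k>1$ --- not merely $k>0$ --- that forces $q^{\,1-k-q^{-k}}\to 0$ and keeps the coordinate hyperplanes from carrying spurious gradient mass in the limit.
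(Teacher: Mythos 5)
Your proof is correct and takes essentially the same route as the paper's: the identical derivative formula $q^{-k}\left(\abs{w_i}+q^{-1}\right)^{q^{-k}-1}$, the same case split between $w_i\neq 0$ and $w_i=0$, and the same use of $k>1$ to force the boundary term $q^{1-k-q^{-k}}\to 0$; where the paper simply cites Lighthill's definition of the derivative of a generalized function, you spell out the distributional identification via convergence in $\mathcal{D}'$ and continuity of distributional differentiation, which is the same idea made explicit. The one small step you glide over is that your evaluation of $\lim_{q\to\infty}\nabla\Omega_{q,k}$ is pointwise while the identification requires the limit in $\mathcal{D}'$, but this is immediate from your own boundary computation, since $\abs{\partial\Omega_{q,k}(\vc{w})/\partial w_i}\leq q^{-k}\left(q^{-1}\right)^{q^{-k}-1}=q^{1-k-q^{-k}}$ uniformly in $\vc{w}$, so the gradients in fact converge to $\vc{0}$ uniformly.
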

\begin{proof}
The first equality follows from \emph{Definition 6} in \citep{Lighthill}. To prove the second equality, the derivative of $\Omega_{q,k}(\vc{w})$ with respect to $w_i$ is
\begin{equation}
\partial \Omega_{q,k}(\vc{w})/\partial w_i = q^{-k}\left(\abs{w_i} + q^{-1}\right)^{q^{-k}-1}\sign(w_i).
\end{equation}
Hence, we need to show that $\enspace \displaystyle \lim_{q \rightarrow \infty} \frac{q^{-k}}{\left(\abs{w_i} + q^{-1}\right)^{1-q^{-k}}} = 0 \enspace $,
which is obviously true for $w_i \neq 0$. For $w_i = 0$, $k > 1$ results in $\enspace \displaystyle \lim_{q \rightarrow \infty} \frac{q^{-k}}{q^{q^{-k} - 1}} = \lim_{q \rightarrow \infty} \frac{1}{q^{q^{-k} + k - 1}}  = 0 \displaystyle$.
\end{proof}

\section{Hot-starting Columns for Experiments D. and E.}
\begin{table*}[h!]
\caption{Numbers of hot-starting columns used in experiments {\bf D.} and {\bf E.}}
\vspace{0.3cm}
\includegraphics[width=15cm]{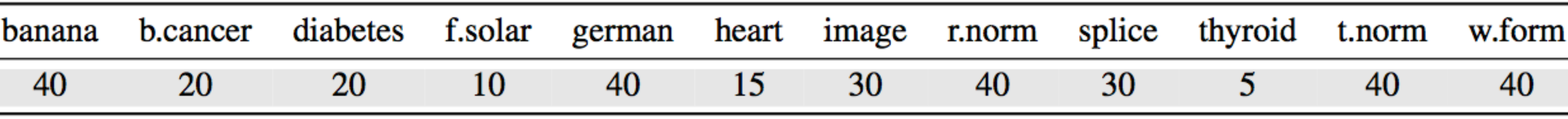}
\end{table*}

\end{document}